\documentclass[12pt]{article}
\usepackage{latexsym,amsmath,amssymb,graphicx}

\newtheorem{thm}{Theorem}[section]
\newtheorem{lemma}[thm]{Lemma}
\newtheorem{propo}[thm]{Proposition}
\newtheorem{coro}[thm]{Corollary}
\newtheorem{fig}{Figure}

\newenvironment{proof}{\begin{trivlist}\item[]{\em Proof.\/\ }}%
                      {\hfill$\Box$ \\ \end{trivlist}}

\setlength{\topmargin}{0.0cm}
\setlength{\headheight}{0.0cm}
\setlength{\headsep}{0.3cm}
\setlength{\parskip}{5mm}
\setlength{\textheight}{23.5cm}
\setlength{\oddsidemargin}{0cm}
\setlength{\evensidemargin}{0cm}
\setlength{\textwidth}{16cm}
\setlength{\parindent}{0cm}   

\title{Space and camera path reconstruction for omni-directional vision}
\author{Oliver Knill and Jose Ramirez-Herran
\footnote{Harvard University, this research was supported by the Harvard Extension School}}
\date{August 17, 2007}

\begin{document}
\maketitle

\abstract{
In this paper, we address the inverse problem of reconstructing a scene as well as the camera 
motion from the image sequence taken by an omni-directional camera.
Our structure from motion results give sharp conditions under which the reconstruction is unique.
For example, if there are three points in general position and three omni-directional 
cameras in general position, a unique reconstruction is possible up to a similarity.
We then look at the reconstruction problem with $m$ cameras and $n$ points, where $n$ and $m$ can 
be large and the over-determined system is solved by least square methods.  
The reconstruction is robust and generalizes to the case of a dynamic environment
where landmarks can move during the movie capture. Possible applications of the result are computer
assisted scene reconstruction, 3D scanning, autonomous robot navigation, medical tomography 
and city reconstructions. 
}

\section{Introduction}

In this paper we address the structure from motion (SFM) problem for omni-directional, 
central panoramic cameras.
The SFM problem is the task of doing a simultaneous reconstruction of objects and camera
positions from the pictures taken by a moving camera. We explore here 
the reconstruction problem for oriented omni-directional cameras, spherical cameras for 
which the reconstruction is particularly convenient. Such cameras can be realized as 
central catadioptric systems which have become so affordable that capturing panoramic 
360 degree images has become popular photographic technique. Omnidirectional vision 
offers a lot of benefits. It is easier to deal with the rotation of the camera for example,
objects do not disappear from view but only change their angular image positions. 
Omnidirectional cameras share the simplicity of orthographic affine cameras 
and have all the benefits of perspective cameras. See \cite{KnillRamirezUllman}.
Unlike orthographic cameras, the camera location is determined. Not at least, the
eye vision of some insects comes close to panoramic vision. It is not surprising that there is already a 
large scientific literature dealing with this part of computer vision. \cite{Benosman} \\

Applications of omni-direction vision are in navigation of autonomous vehicles \cite{Ehlgen}, 
robotics \cite{ThrunBurgardFox,FrancisSpacek,
SpacekBurbridge,Tamimi05,Yagi03,Yagi04}, tracking and motion detection,
simultaneous location and mapping \cite{fastslam}, site modeling, image
sensors for security \cite{IshiguroNCT03,Yagi04} and virtual reality
\cite{MatsumotoII03}. The most recent addendum to google maps 
includes 360 degree panorama pictures embedded into the street
maps. Omnidirectional cameras have captured cities so that users can
move around in a virtual reality environment. Omnidirectional pictures
are used already for building virtual cities \cite{IkeuchiSKS04}.
A fundamental problem in robotics is the simultaneous localization and 
mapping problem, commonly abbreviated as SLAM, and also
known as concurrent mapping and localization CML. Of course, this is just
an other name for the structure from motion problem. 
SLAM Problems \cite{fastslam} arise when robots do not have 
access to a map of the environment, nor know their own position. 
In SLAM, the robot acquires a map of its environment while simultaneously 
localizing itself relative to this map. 
SLAM systems have been developed for different sensor types like cameras.
This problem of structure and observer localization is known under the name
{\bf structure from motion} SFM problem in the computer vision literature  \cite{ThrunBurgardFox,Ortiz}.
A standard SFM approach \cite{Szeliski} assumes perspective cameras
from two or more frames. There is an extensive literature available
presenting the mathematics and practical implementation of the different
techniques \cite{trucco} used in such reconstruction: affine
structure from motion and projective structure from motion \cite{forsyth,
hartley}, motion fields of curves \cite{faugeras96}. 
Finally, reconstruction problems matter in 3D scanning techniques, 
where a camera is moved around an object and the camera position has to be
computed too. If the points in the scene can 
move also, the applications expand to security cameras, reconstruction of 
motion in athletics and team sports or CGI techniques in the motion picture
industry. \\

In section 2 we give a brief historical background of the problem and
mention different applications. In section 3 we discuss various relevant camera models. 
In section 4, we give the reconstruction of omni-directional cameras in two-dimensions, 
where a linear system of equations reveals the relationship between points on the 
photographs and the actual point and camera locations. This system is in general 
over-determined and the reconstruction is done with least square methods.
We give sufficient conditions for a reconstruction to be unique. For example, if $n\geq3$
points and $m \geq 3$ cameras are together not on the union of two lines, then a
unique reconstruction is possible. 
In section 5, the result is extended from two dimensions to three dimensions. 
In section 6, the problem is generalized by allowing the points to move while doing
the reconstruction. Finally, in section 8, we discuss the problem when the orientation of the 
omni-directional camera is not known. While for oriented omni-direction cameras, 
our reconstruction is error free and a linear problem, in practice, the point matching 
produces inaccurate results and require error estimates that we present in section 7. 
In the present paper, we ignore the correspondence problem and assume 
that the projections of $n$ points have been matched across $m$ pictures. 
We tested our algorithm with synthetic data and give numerical measurements 
of the reconstruction error in dependence on the size of the perturbations added to 
the image data. 

\section{The structure from motion problem}

Reconstructing both the space and camera positions from observations is an old problem 
in mathematics and computer science. It is an example of an {\bf inverse problem} in geometry.
It is similar to {\bf tomography} but in general nonlinear. The simultaneous Euclidean 
recovery of shape and camera positions from an image sequences is often called the 
{\bf structure from motion problem} SFM. While sometimes the term is used for the 
problem of reconstructing space with known camera positions or camera parameters 
from known point configurations, the SFM problem reconstruct {\bf both} static points 
and camera positions.  This is the definition used in \cite{Ullman,Kanade} and 
treated in various textbooks like
\cite{hartley} 
for perspective cameras. We only focus on {\bf Euclidean reconstruction},
a reconstruction unique up to a translation and rotation. Except for fixing the coordinate
system, we do not assume to have {\bf ground truth}, known ground control points except for fixing
the origin of the coordinate system. SFM is not to be confused with the concept of 
{\bf motion and structure problem} which is a problem to recover the structure from motion 
fields \cite{faugeras96}. In \cite{KnillRamirezInequality}, we have given the following general
definition of the SFM problem: a camera is a transformation $Q$ on a $d$ dimensional 
manifold $N$ satisfying $Q^2=Q$ which has as an image a lower-dimensional 
surface $S$. Given a manifold $M$ of cameras $Q$ for which all $Q(N)$ are isomorphic to a 
the retinal manifold $S$, the SFM problem asks to reconstruct 
$(P,Q) \in N^n \times M^m$ from the image-data matrix
$\{ Q_i(P_j) \; | \; 1 \leq i \leq n,  1 \leq j \leq m \; \} \in S^{n m}$ modulo a global 
symmetry group $G$ which acts both on $N$ and $M$ leaving the image data invariant:
if $(P,Q)$ and $(P',Q')$ are in the same orbit of $G$, then $Q'(P')=Q(P)$.  \\

The field of image reconstruction is part of {\bf computer vision} and 
also related to {\bf photogrammetry} \cite{photogrammetry}, where the focus is on {\bf accurate} measurements.
In the motion picture industry, reconstructions are used for {\bf 3D scanning} purposes or to render
computer generated images {\bf CGI}. Most scanning and CGI methods often work with known 
camera positions or additional objects are added to calibrate the cameras with 
additional geometric objects. As mentioned above, the 
problem is called {\bf simultaneous localization and mapping problem} in the robotics literature
and is also known as {\bf concurrent mapping and localization}. \\

We know from daily experience that we can work
out the shape and position of the visible objects as well as our own position and
direction while walking through our surroundings. Objects closer to us move faster on the retinal surface,
objects far away do less. It is an interesting problem how much and by which way we
can use this information to reconstruct our position and surroundings \cite{Qian,Richards}.
Even with moving objects, we can estimate precisely the position and speed of objects.
For example, we are able to predict the trajectory of a ball thrown to us and catch it.  \\

The mathematical problem of reconstructing of our
surroundings from observations can be considered as one of the oldest tasks 
in science at all because it is part of an ancient {\bf astronomical quest}:
the problem of finding the positions and motion of the planets
when observing their motion on the sky. The earth is the omni-directional camera
moving through space. The task is to compute
the positions of the planets and sun as well as the path of the earth which is the camera. 
This historical case illustrates the struggle with the structure from motion problem: there was an evolution 
of understanding from Aristoteles, the Ptolemaic geocentric model over the Copernican heliocentric system 
to the discoveries of Brahe, Kepler and Newton. \\

An other seed of interest in the problem is the two dimensional problem of {\bf nautical surveying}. 
A ship which does not know its position but its orientation measures the angles between various
points it can see. It makes several observations and observes cost points. The task is to draw a map 
of the coast as well as to reconstruct the position of the ship. \cite{beautemps}.\\

We develop here a fresh and elementary approach for computing and reconstructing panoramic              
three dimensional scenes from omni-directional video sequences. Similar than other
techniques, we reduce the reconstruction to a least square problem and obtain the unknown
structure as well as the camera path from the image sequence. While the equations
are nonlinear, they can be reduced to linear problems. In comparison, the SFM problem for 
affine orthographic cameras are nonlinear \cite{KnillRamirezUllman}. 
While our approach is simple, it is flexible and and generalizes when 
the objects in the scene are allowed to move. In the case of a static scene recorded without errors, the 
algorithm reconstructs the observed points exactly. It is not an approximation. Uniqueness of 
the reconstruction is assured under mild non-collinearity conditions.  
One of the goals in this paper to point out such borderline ambiguities.  Due to the linearity of the 
problem, the ambiguities appear on linear subspaces of the full configuration space and can be analyzed
with elementary geometric methods.   \\

The mathematics of the structure from motion problem has a rich history. We mentioned astronomy and nautical
surveying, but there are origins in pure geometry as well: from Euclid's work on optics, to Chasles,
Helmholtz and Gibson \cite{Koenderink}. For perspective cameras, the reconstruction of camera and points
from 7 point correspondences and two cameras has been addressed by Chasles in 1855 from a purely 
mathematical point of view \cite{Chasles}.  \\ 
One of the first mathematical results in the structure from motion problem beyond the stereo situation
is {\bf Ullman's theorem} from 1979, which deals with orthographic projections.
"For rigid transformations, a unique metrical reconstruction is 
known to be possible from three orthogra
phic views of four points" \cite{Ullman}. 
Ullman's theorem deals with orthographic affine cameras, cameras for which
the camera center is at infinity. Modulo a reflection, it is possible to recover the point 
positions as well as the planes from the projections in general for four points and three cameras. 
We have given explicit {\bf locally unique} 
reconstruction formulas for 3 cameras and 3 points in \cite{KnillRamirezUllman}.  \\

We show here that for omni-directional vision with fixed orientation, 3 points and three cameras allow
a unique reconstruction if the 6 points are not contained in two lines and both cameras and point 
configurations are not collinear. 
If we have two oriented omni-directional cameras and two points, a reconstruction is possible 
uniquely if and only if the four points are not collinear. The mathematics for 
omni-directional cameras which are not oriented is more complicated because the equations become
transcendental. For omni-direction vision without orientation, 3 cameras and 3 points are enough in general.
Ullman's theorem in the affine case actually can be considered to be a limiting case of an omni-directional result
when all camera centers go to infinity.  \\

For previous approaches to this problem ranging from the
classical stereo vision methods which uses only to frames to the 
$n$-views vision, see \cite{faugeras,hartley,faugeras96,szeliski94recovering}. 
This is an interesting and active area of research on computer vision.

\section{Spherical cameras}
A {\bf camera} in space is a smooth map $Q$ from three dimensional space to a $2$-dimensional
{\bf retinal surface} $S$ so that $Q^2=Q$ \cite{KnillRamirezInequality}.
Of particular interest are cameras, where the surface $S$ is a sphere: \\

A {\bf spherical camera} $Q$ in space is defined by a point $C=C(Q)$ and a sphere $S=S(Q)$ centered
at $C$. The camera maps $P$ to a point $p=Q(P)$ on $S$ by intersecting the line $CP$
with $S$. We label a point $p$ with two spherical Euler angles $(\theta,\phi)$. We also use the more 
common name {\bf omni-directional cameras} or {\bf central panoramic cameras}. Of course, the radius of the 
sphere does not matter. A point
$P$ is seen by the camera by the spherical data  $(\theta,\phi)$.
In two dimensions, one can consider {\bf circular camera} defined by a point $C$ and a circle $O$ around the
point. A point $P$ in the plane is mapped onto a point $p$ on $O$ by intersecting the line $CP$
with $O$. Spherical and circular camera only have the point $C$ and the 
orientation as internal parameters. The radius of the sphere is irrelevant. 
One could also look at {\bf cylindrical cameras} in space is defined by a point $C$ and a cylinder $C$ with axes $L$.
A point $P$ is mapped to the point $p$ on $C$ which is the intersection of the linesegment
$CP$ with $C$. A point $p$ in the film can be describe with cylinder coordinates $(\theta,z)$. 
Because cylindrical cameras capture the entire world except for points on the 
symmetry axes of the cylinder, one could include them in the class of {\bf omni-directional cameras}. 
Omnidirectional camera pictures are also called {\bf panoramas}, even if only part of the
360 field of view is seen and part of the height are known \cite{Peleg,Benosman}.  \\
Of course, cylindrical and spherical cameras are closely related. Given the height angle 
$\phi$ between the line $CP$ and the horizontal plane and the radius $r$ of the cylinder, 
we get the height $z=r \sin(\phi)$, so that a simple change of the coordinate system matches
one situation with the other. We can also model a perspective camera with  
omni-directional camera pictures: if only a small part of the sphere is taken, the picture is similar
than the projective picture taken by the tangent tangent plane. 
Not at least because spherical cameras do not have a focal parameter $f$ as perspective cameras, 
they are easier to work with.  \\

We say, a spherical camera is {\bf oriented}, if its direction is known. Oriented spherical cameras
have only the center of the camera as their internal parameter. The camera parameter manifold $M$ is therefore 
$d$-dimensional. For non-oriented spherical cameras, there are additionally $d(d-1)/2 = {\rm dim}(SO_d)$ 
parameters needed to fix the orientation of the camera. For $d=2$, this is one rotation parameter, for 
$d=3$, there are three Euler rotation parameters.  \\

Practical implementations of omni-directional cameras are the Sony {\bf "Full-Circle 360"} Lense Mechanisms,
which allows a vertical field of view $-17^{\circ} \leq \phi \leq 70^{\circ}$, 
the IPIX {\bf fish eye lens} which gives $0 \leq \theta \leq 185, -92 \leq \phi \leq 92$ and 
needs two clicks, Microsofts {\bf ring cam}, which combines 4 webcameras to get one 360 panoramas, 
a {\bf "HyperOmniVision"} camera, which is made of a hyperbolic mirror used for mobile robots
\cite{Yagi03}, and the {\bf "0-360 Panoramic Optic"} camera which allows a one click 
vertical field of view of $-62.5^{\circ} \leq \phi \leq 52.5^{\circ}$. This camera is an 
example of a so called {\bf central catadioptric system}, a camera in which mirrors are involved. 
More panoramic cameras, from fish-eye cameras to swing-lense cameras are described in \cite{Solomon}.  
"One click" solutions have the advantage that one can also do 360 movies with one camera, that no stitching
is required and that the picture is taken at the same time. 
For more information on low-cost omni-directional cameras, see \cite{Ishiguro} in \cite{Benosman}. \\

In practice, an omni-directional camera can be considered oriented if an arrow of gravity and 
the north direction vector are both known. A robot on earth with a spherical camera is
oriented if it has a compass built in. It could also orient itself with some reference
points at infinity. We discuss in a later section how one can recover the orientation from the 
camera frames. \\

For an oriented omni-directional camera, we only 
need to know the position so that the dimension of the internal camera space is $f=d$. 
For a non-oriented omni-directional camera, we need to know additionally the orientation which lead to $f=d+d (d-1)/2$ parameters.
In three dimensions, non-oriented omni-cameras match the simplicity of affine orthographic
cameras. An important advantage for the structure of motion problem is that
omni-directional cameras have a definite location. They can model perspective cameras without 
sharing their complexity. \\

We know that in order for one to recover all the point and camera
parameters, the structure from motion inequality 
$$\label{dimensionformula} dn + fm + h \leq (d-1) n m + g  $$
has to be satisfied, where $f$ is the dimension of the internal camera parameter space, 
$h$ is the dimension of global parameters which apply to all cameras and where $g$ is the 
dimension of the camera symmetry group $G$. See \cite{KnillRamirezInequality}. 

\section{Planar omni-directional cameras}

We now solve the reconstruction problem for oriented omni-directional cameras in the plane.
This two-dimensional reconstruction will be an integral part of the general 
three-dimensional reconstruction for oriented omni-directional cameras. It turns out 
that for the omni-directional inverse problem with oriented cameras, the uniqueness of 
the reconstruction in space is already determined by the uniqueness in the plane, because 
if the first two coordinates of all points are known, then the height coordinate is 
determined uniquely by the slopes up to a global translation. 
How many points and cameras do we need?

\begin{center}
\parbox{15.5cm}{
\parbox{7.5cm}{\scalebox{0.45}{\includegraphics{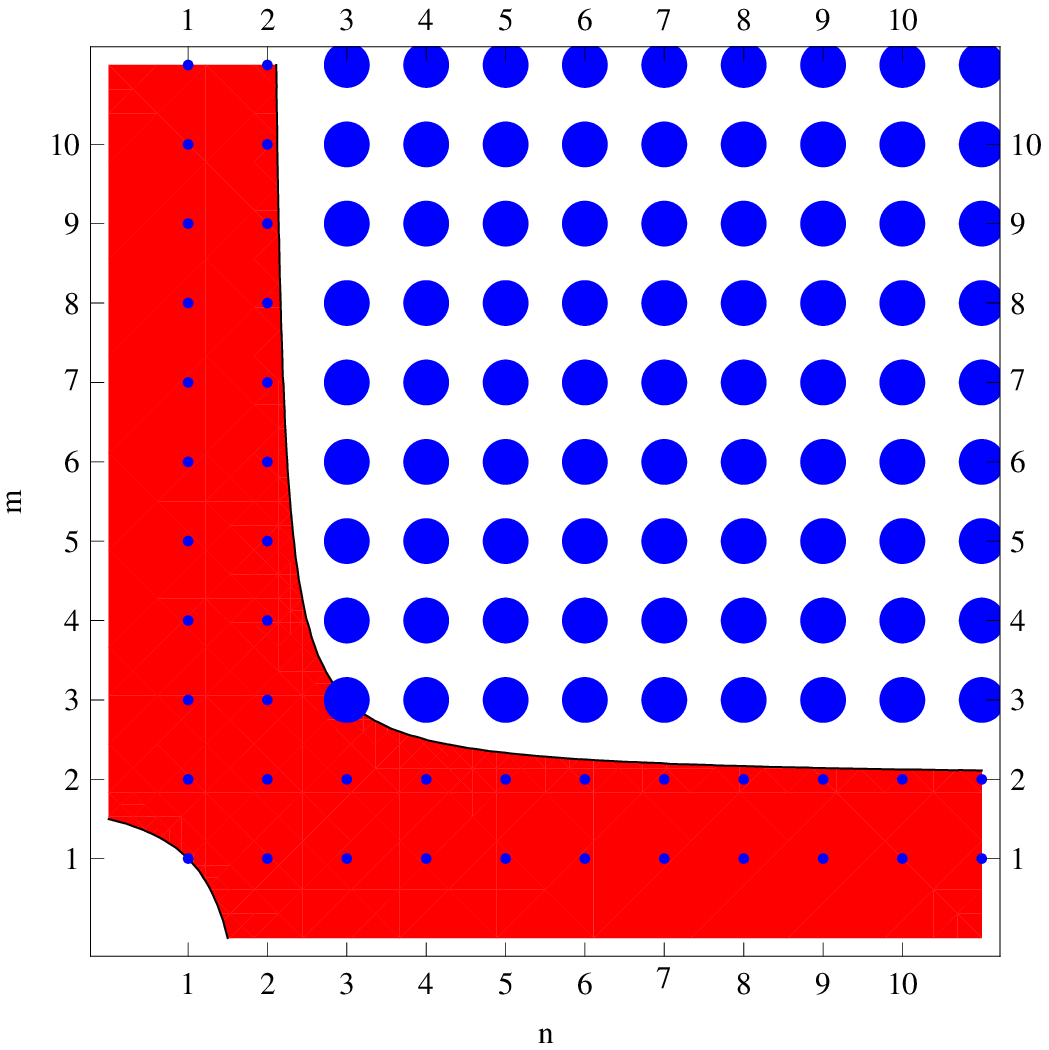}}}
\parbox{7.5cm}{\scalebox{0.45}{\includegraphics{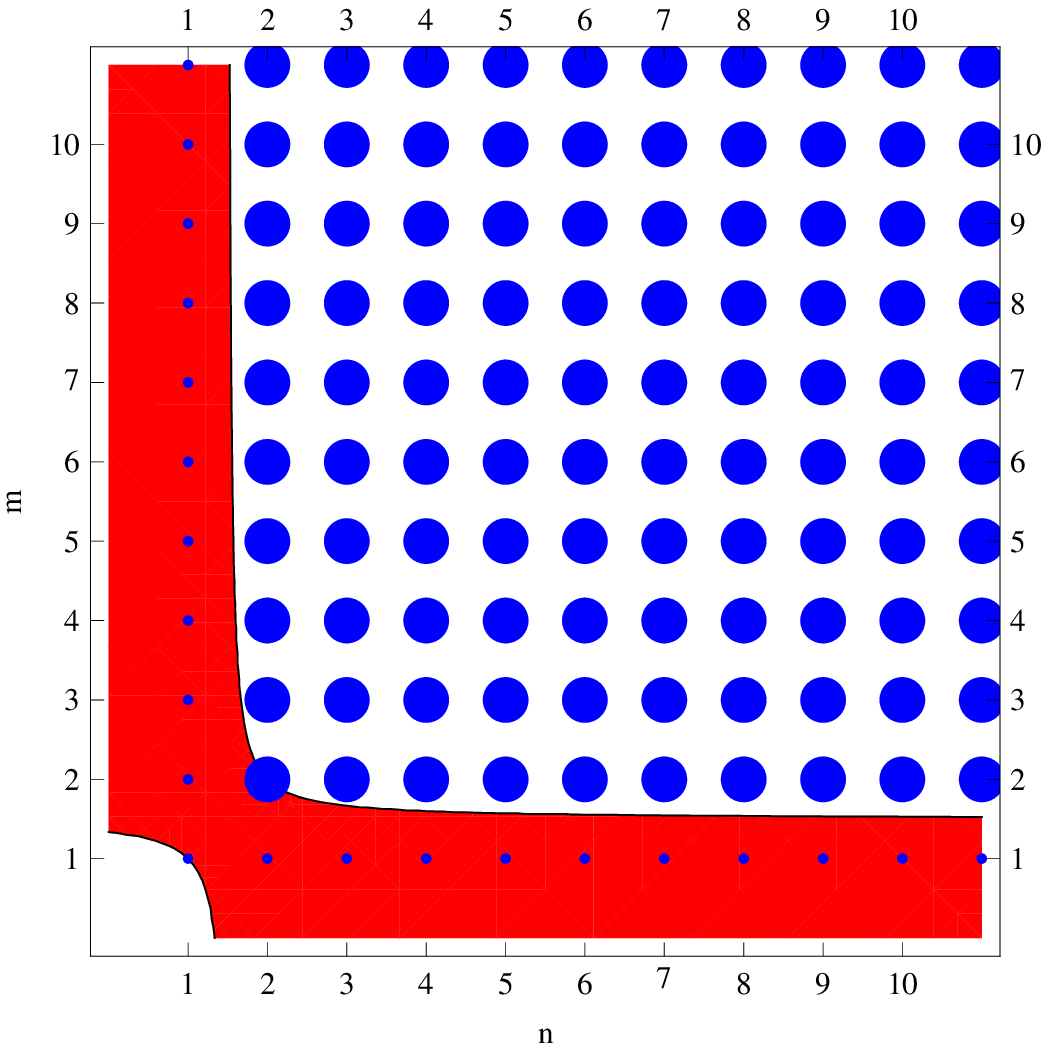}}}
}
\end{center}
\begin{center}
\parbox{15.5cm}{
\parbox{7.5cm}{Oriented Omni  \\ (d,f,g) = (2,2,3)}
\parbox{7.5cm}{Oriented Omni  \\ (d,f,g) = (3,3,4)}
}
\end{center}
\begin{fig}
The forbidden region in the $(n,m)$ plane for oriented omni-directional cameras.
In the plane, $(m,n)=(3,3)$ is a border line case. In space, $(m,n)=(2,2)$ is a
border line case. For $(m,n)$ outside the forbidden region, the reconstruction 
problem is over-determined. 
\end{fig}

Given $n$ points $P_1, \dots ,P_n$ in the plane and an omni-directional camera which
moves on a path $r(t)=(a(t),b(t))$ so that we have cameras $Q_j$ at the points $(a(t_j),b(t_j))$.
We assume that the camera has a fixed orientation in 
the sense that a fixed direction of the camera points north at all times. 
The camera observes the angles $\theta_{i}(t_j)$ under which the points are seen. The angles
are defined if we assume that the camera path is disjoint from the points $P_i$.  \\

\parbox{6.5cm}{\scalebox{0.60}{\includegraphics{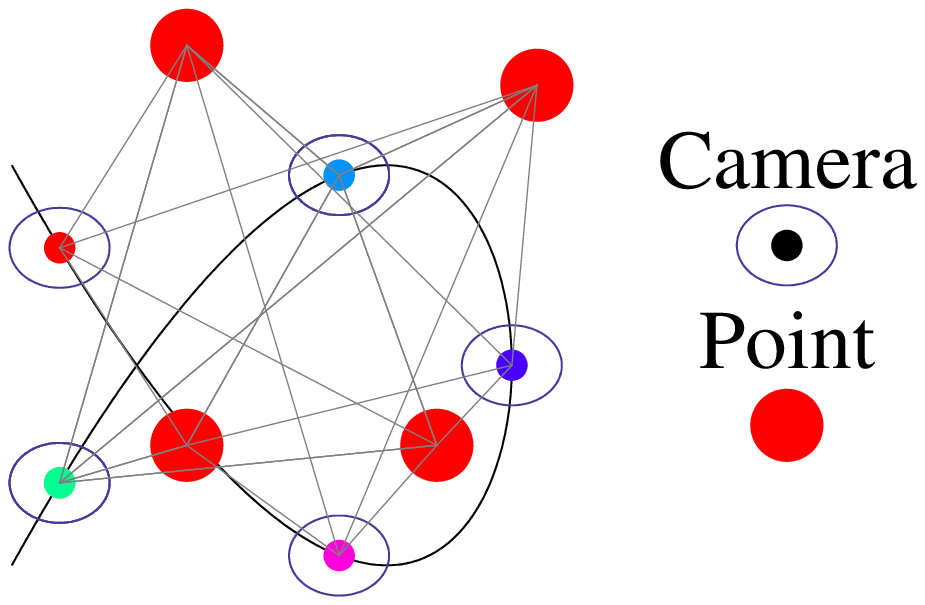}}}
\begin{fig}
The structure from motion problem for omni-directional cameras in the plane. 
We know the angles between points and cameras and want to reconstruct both the 
camera positions as well as the point positions up to a global similarity.
\end{fig}

It is a standing assumption in this article that two different cameras are at two different 
locations, two different points are at two different locations and no camera is at the same
place than a point. In other words, we always assume to deal with $n+m$ different points.  \\

How do we reconstruct the camera positions $Q_j = r(t_j)$ and the points $P_1, \dots ,P_n$
from the angles, under which the cameras see the points?  \\

If $P_i=(x_i,y_i)$ are the $n$ points and $Q_j = (a(t_j),b(t_j))$ are the $m$
camera positions, we know the slopes $\sin(\theta_{ij})/\cos(\theta_{ij})$ if $P_i \neq Q_j$. 
The linear system of $n m$ equations
$$  \sin(\theta_{ij}) (b_i - y_j) = \cos(\theta_{ij}) (a_i - x_j)      $$
for the $2n$ variables $a_i,b_i$ and $2m$ variables $x_j,y_j$ allows in general a reconstruction
if $m n \geq 2n+2m$. But the reconstruction is not unique: the system of equations is still homogeneous
because scaling and translating of a solution produces a new solution. 
By fixing one point $x_1=y_1=0$, the translational symmetry
of the problem is broken. By fixing $x_2=1$ or the distance between $P_1$ and $P_2$ 
the scale is fixed. So, if $m n \geq 2n+2m-3$,
we expect a unique solution. If we write the system of linear equations as $A x = b$, 
then the least square solution is $x = (A^T A)^{-1} A^T b$. \\

For example, in the plane, for $n=3$ points and $m=3$ cameras, we can already reconstruct both 
the point and camera positions in the plane in general: there are 
$2n+3m=(2 \cdot 3) + (2 \cdot 3) =12$ unknowns and $9$ equations. The similarity invariance
fixes $3$ variables so that we have the same number of equations than unknowns.   \\

It is important to know when the reconstruction is unique and if the system is overdetermined,
when the least square solution is unique. In a borderline case, the matrix $A$ is a square
matrix and uniqueness is equivalent to the invertibility of $A$. In the overdetermined case, 
we have a linear system $Ax=b$. There is a unique least square solution
if and only if the matrix $A$ has a trivial kernel. \\

We call a point-camera configuration {\bf ambiguous}, if there exists more than one
solution of the inverse problem. The point-camera configuration is ambiguous if and 
only if the matrix $A$ has a nontrivial kernel. \\

For ambiguous configurations, the solution space 
to the reconstruction is a linear space of positive dimension. Examples of 
an ambiguous configuration are {\bf collinear configurations}, where all points as 
well as the camera path lie on one line. In that case, the points seen on the 
image frames are constant. One can not reconstruct the points nor the camera
positions. \\

If a scale or origin is not specificied, then $\lambda P_i, \lambda Q_j$ 
would produce a family of solutions with the same angular data. We assume to have
factored out these symmetries and do not call this an ambiguity. \\

We now formulate a fundamental result of circular camera reconstructions in the 
plane. It gives an answer when $m=3$ cameras which have taken pictures of 
$n=3$ points, both the camera and the point positions can be obtained 
uniquely up to a similarity. \\

\begin{thm}[Structure from motion for omni cameras in the plane I\\] 
If both the camera positions as well and the point positions are not collinear
and the union of camera and point positions are not contained in the union of two lines, then 
the camera pictures uniquely determine the circular camera positions together with the 
point locations up to a scale and a translation. 
\end{thm}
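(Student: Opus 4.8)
\emph{Proof proposal.} The plan is to argue entirely inside the linear algebra set up above. Writing the nine slope equations as $A x = 0$ in the twelve unknowns $(P_1,P_2,P_3,Q_1,Q_2,Q_3)$, the configuration is ambiguous exactly when $\ker A$ is strictly larger than the three-dimensional space of global similarities (two translations and one scaling). Since $A$ records only the \emph{directions} of the vectors $P_i^0-Q_j^0$, an element of $\ker A$ is the same thing as an assignment of velocities $v_i$ to the points and $w_j$ to the cameras with $v_i-w_j$ parallel to $P_i^0-Q_j^0$ for every pair $(i,j)$, i.e.\ a direction-preserving deformation of the complete bipartite framework on the six points; the similarities are exactly the deformations $v_i=\alpha+\lambda P_i^0$, $w_j=\alpha+\lambda Q_j^0$. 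So it suffices to show that under the hypotheses every such deformation is a similarity. First I would normalise using the similarity group: translating, take $w_1=0$, so that every $(i,1)$ equation forces $v_i=t_i(P_i-Q_1)$; the scaling centred at $Q_1$ changes $t_1$ additively, so I may also take $t_1=0$, i.e.\ $v_1=0$; then every $(1,j)$ equation forces $w_j=s_j(P_1-Q_j)$ with $s_1=0$. The standing assumption that the six points are distinct makes each step legitimate and forces the only similarity with $w_1=v_1=0$ to be the zero one, so what is left is to prove $t_2=t_3=s_2=s_3=0$.

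After this normalisation only the four equations with $i,j\in\{2,3\}$ remain. Writing each parallelism condition as the vanishing of a $2\times 2$ determinant and expanding, equation $(i,j)$ becomes
$$ t_i\,\Delta(Q_1,P_i,Q_j) + s_j\,\Delta(P_1,P_i,Q_j) = 0 , $$
where $\Delta(X,Y,Z)$ is the signed area of the triangle $XYZ$. This is a homogeneous $4\times 4$ system in $(t_2,t_3,s_2,s_3)$ whose coefficient matrix is very sparse, and its determinant reduces to a single two-term expression,
$$ \Delta(Q_1,P_2,Q_3)\,\Delta(Q_1,P_3,Q_2)\,\Delta(P_1,P_2,Q_2)\,\Delta(P_1,P_3,Q_3) \;-\; \Delta(Q_1,P_2,Q_2)\,\Delta(Q_1,P_3,Q_3)\,\Delta(P_1,P_2,Q_3)\,\Delta(P_1,P_3,Q_2) . $$
Thus the configuration is ambiguous precisely when this vanishes, and away from the loci where one of the six areas is zero this equality reads
$$ \frac{\Delta(Q_1,P_2,Q_3)\,\Delta(Q_1,P_3,Q_2)}{\Delta(Q_1,P_2,Q_2)\,\Delta(Q_1,P_3,Q_3)} = \frac{\Delta(P_1,P_2,Q_3)\,\Delta(P_1,P_3,Q_2)}{\Delta(P_1,P_2,Q_2)\,\Delta(P_1,P_3,Q_3)} , $$
where each side is the cross ratio of the four points $P_2,P_3,Q_2,Q_3$ seen as a pencil of lines through $Q_1$, respectively through $P_1$.

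The step I expect to carry the real content is the geometric reading of this identity. By the classical projective generation of conics (Steiner's theorem and its converse: the cross ratio of the pencil joining four fixed points to a varying point is constant along any conic through them, and two points yielding the same value lie with the four on a common conic), the pencils $Q_1(P_2,P_3,Q_2,Q_3)$ and $P_1(P_2,P_3,Q_2,Q_3)$ agree in cross ratio exactly when the six points $P_1,P_2,P_3,Q_1,Q_2,Q_3$ lie on one conic. Hence the ambiguous configurations are precisely those lying on a common conic, and the hypotheses of the theorem are there to exclude the degenerate members of this family: the union of two lines is a degenerate conic, and the further boundary cases in which the three points, or the three cameras, are collinear are exactly those in which some denominator area above vanishes and which one checks separately by returning to the four scalar equations directly. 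Once these are ruled out the displayed determinant is nonzero, $\ker A$ is the minimal three-dimensional space of similarities, and the reconstruction is unique up to a translation and a scaling. (One should note that six points on a \emph{non-degenerate} conic also annihilate the determinant, so the genuinely sharp hypothesis is that the six points do not lie on a common conic; ``two lines'' is the degenerate instance that matters in practice, and the two collinearity assumptions cover the remaining boundary strata.)
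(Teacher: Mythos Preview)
Your route is quite different from the paper's. The paper argues synthetically: it pins $P_1$ and $d(P_1,P_2)$, asserts that $P_2$ is then stationary, introduces the triangularisation and deformation lemmas, and shows that any non-trivial deformation would force all points and cameras onto two lines $L\cup M$, contradicting the hypothesis (with a Desargues argument to handle an extra deformable point). You instead reduce $\ker A$ modulo similarities to a sparse $4\times4$ homogeneous system, compute its determinant as a difference of two products of four signed areas, and read its vanishing as the equality of the cross ratios of the pencil $(P_2,P_3,Q_2,Q_3)$ seen from $Q_1$ and from $P_1$; Steiner's theorem then identifies the vanishing locus as ``six points on a common conic''. Your algebra and the projective interpretation are both correct.

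But precisely because they are correct, you have not proved the stated theorem for the borderline case $n=m=3$ that the paper is discussing --- you have shown it fails there. Six points on a \emph{smooth} conic satisfy all three hypotheses (neither triple can be collinear, and since a line meets a smooth conic in at most two points the six points never lie on two lines), yet your determinant vanishes; e.g.\ on $y=x^2$ with $P_1=(0,0),\,P_2=(2,4),\,P_3=(3,9),\,Q_1=(1,1),\,Q_2=(-1,1),\,Q_3=(-2,4)$ one gets $3\cdot6\cdot15\cdot8-3\cdot8\cdot6\cdot15=0$. You flag this yourself in the closing parenthetical, and you are right that the sharp hypothesis is ``the six points do not lie on a common conic''; that, however, is a different theorem from the one stated. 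The matching weak point in the paper's argument is the sentence ``Because the point $P_2$ moves linearly and has to stay within a fixed distance, it is fixed too'': the quadratic constraint $|P_2-P_1|=1$ only cuts the linear solution space down by one dimension, so when $\dim\ker A>3$ the point $P_2$ can still move along an arc of that circle, and the rest of the paper's deduction --- which rests on both $P_1$ and $P_2$ being stationary --- never gets off the ground.
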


Even so the actual reconstruction is a problem in linear algebra, 
this elementary result is of pure planimetric nature: we have two non-collinear 
point sets $\cal{P},\cal{Q}$  whose union is not in the union of two lines, then the angles between 
points in $\cal{P}$ and $\cal{Q}$ determine the points $\cal{P},\cal{Q}$ up to scale and translation.
The result should be seen with the background of ambiguity results in 
the plane like Chasles theorem \cite{hartley} 

We call a point or a camera {\bf stationary} if it can not be deformed without changing the
angles between cameras and points. We call it {\bf deformable} if it can be deformed without
changing angles between cameras and points. If a point or a camera is deformable, it can move
on a line. The reason is that the actual reconstruction problem can be written as a system 
of linear equations. We call a choice of a one-dimensional deformation space the {\bf deformation line} 
of the point or the camera. If we have an ambiguous camera-point configuration, then 
there exists at least one deformable point or camera. The deformation space is a linear space. 

\begin{lemma}[Triangularization]
a) If three non-collinear points $P,Q,R$ are fixed, then each camera $C$ position is determined
uniquely from the camera-to-point angles. \\
b) If three cameras $A,B,C$ are fixed, then the camera-to-point 
angles determine each point in the plane uniquely.
\end{lemma}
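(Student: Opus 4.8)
The plan is to reduce both statements to one elementary incidence fact: a single angular measurement already confines the unknown to a known line. If an oriented camera at $C$ sees a point $P$ under angle $\theta$, then $C$ lies on the line through $P$ with direction $(\cos\theta,\sin\theta)$; the orientation makes this direction unambiguous, and although $C$ is really restricted to a ray, for the uniqueness question it suffices to work with the full line. Thus in part (a) the camera position $C$ lies on three known lines $\ell_P,\ell_Q,\ell_R$ attached to the fixed points $P,Q,R$, while in part (b) the point $P$ lies on three known lines $\ell_A,\ell_B,\ell_C$ attached to the fixed cameras $A,B,C$. The proof then amounts to showing that among such a triple of lines at least two are distinct and non-parallel, so that they meet in exactly one point, which is the sought reconstruction.

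For part (a) I would argue as follows. Given that $C$ lies on both $\ell_P$ and $\ell_Q$, these two lines coincide exactly when the direction vectors $(\cos\theta_P,\sin\theta_P)$ and $(\cos\theta_Q,\sin\theta_Q)$ are parallel, i.e. exactly when $C,P,Q$ are collinear. Next I would show that $C$ cannot be collinear with two different pairs from $\{P,Q\},\{P,R\},\{Q,R\}$: if $C$ lay on the line $PQ$ and on the line $PR$, then, since $P,Q,R$ are not collinear, these two lines meet only in $P$, forcing $C=P$, contrary to the standing assumption that no camera sits at a point; the other two cases are identical. Hence at most one of the three pairs is degenerate, so some pair — say $\{P,Q\}$ — yields $\ell_P\neq\ell_Q$, two non-parallel lines whose unique intersection must be $C$. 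Part (b) is the exact dual of this, with the non-collinearity hypothesis moved from the points to the cameras $A,B,C$ (this is the natural companion hypothesis and is automatically satisfied whenever the lemma is applied inside the reconstruction theorem above, whose hypotheses already forbid collinear cameras) and with the standing assumption used to exclude $P\in\{A,B,C\}$; the same two-line argument then pins down $P$.

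I do not anticipate a real difficulty, only bookkeeping. The two points needing care are: (i) being consistent about rays versus lines and about directions taken modulo $\pi$ — passing to the full line and using the orientation of the cameras removes the sign ambiguities and makes the equivalence ``the two lines coincide $\iff$ the three relevant points are collinear'' clean; and (ii) making explicit where the standing assumption that the $n+m$ points are distinct is invoked, namely to rule out $C\in\{P,Q,R\}$ in (a) and $P\in\{A,B,C\}$ in (b), since those coincidences are exactly what would break the ``at most one degenerate pair'' count. If one wishes to keep part (b) verbatim for possibly collinear cameras, one must add the caveat that a point lying on the common camera line is then not recoverable, so I would either state the non-collinearity of $A,B,C$ as a hypothesis of (b) as well, or restrict (b) to points off that line, noting that this degenerate situation never occurs under the hypotheses of the theorem above.
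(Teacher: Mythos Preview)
Your argument is correct and essentially the same as the paper's: both reduce to showing that $C$ avoids at least one of the lines $PQ,QR,PR$ (the paper phrases this via triangle determination from one known side and two angles, you phrase it via intersecting two known direction-lines), and both use the non-collinearity of $P,Q,R$ to guarantee that such a non-degenerate pair exists. Your bookkeeping is in fact more explicit than the paper's---in particular your observation that part (b) tacitly needs non-collinearity of the cameras $A,B,C$, and your explicit invocation of the standing assumption $C\notin\{P,Q,R\}$, are details the paper leaves to the reader.
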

\begin{proof}
a) If $C$ is not on the line $PQ$, we know two angles and the length of one side of the triangle
$PQC$. Similarly for the other lines $QR,PR$. Because the intersection of the three lines is empty,
every point $C$ is determined. \\
b) Part b) has the same proof. Just switch $P,Q,R$ and $A,B,C$. 
\end{proof}

\begin{lemma}[Deformation]
a) Every stationary camera must be on the deformation line of a deformable point. \\
b) Every stationary point must be on the deformation line of a deformable camera. \\
\end{lemma}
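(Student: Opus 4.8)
\emph{Proof plan.} The plan is to push everything into the homogeneous linear system behind the reconstruction and then read the statement off geometrically. Write $n_{ij}=(-\sin\theta_{ij},\cos\theta_{ij})$ for the fixed normal of the line of sight from camera $j$ to point $i$, so that the reconstruction equations are $(P_i-Q_j)\cdot n_{ij}=0$; their common solution set is a linear subspace $V\subseteq\mathbf{R}^{2n+2m}$. It always contains the $3$-dimensional subspace $V_0$ spanned by the two rigid translations of the whole configuration and by the configuration vector $(P_1,\dots,P_n,Q_1,\dots,Q_m)$ itself (the infinitesimal rescaling), and the configuration is ambiguous exactly when $V\supsetneq V_0$. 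In that case, after fixing one point and the scale, one is left with a space $\tilde V$ of genuine deformations of positive dimension; a point or camera is deformable precisely when some $v\in\tilde V\setminus\{0\}$ has a nonzero entry in the corresponding coordinate block, and its deformation line is the line through its present position in the direction of that entry.

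The key observation is a ``line of sight'' identity: since every equation $(P_i-Q_j)\cdot n_{ij}=0$ is linear in all the unknowns, each $v\in V$ satisfies $(v_{P_i}-v_{Q_j})\cdot n_{ij}=0$, i.e. $v_{P_i}-v_{Q_j}$ is parallel to the segment $\overline{Q_jP_i}$. In words, under any admissible deformation the relative displacement of a point and a camera is parallel to the line joining them.

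For part a), let $Q_{j_0}$ be a stationary camera and pick any nonzero $v\in\tilde V$, available because the configuration is ambiguous. Stationarity gives $v_{Q_{j_0}}=0$, so the identity yields $v_{P_k}\parallel\overline{Q_{j_0}P_k}$ for every point $P_k$: each point is either fixed by $v$ or moves along the line through $Q_{j_0}$ and $P_k$. It remains to exhibit a point that really moves. If $v$ fixed all points it would fix three of them, and if those three are non-collinear the Triangularization lemma forces each camera of the deformed configuration to sit at its uniquely determined place, so $v_{Q_j}=0$ for all $j$ and $v=0$, a contradiction. Hence some $P_k$ moves under $v$; then $P_k$ is deformable and the line $\overline{Q_{j_0}P_k}$ is one of its deformation lines and passes through $Q_{j_0}$. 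Part b) follows by interchanging points and cameras throughout and using part b) of the Triangularization lemma.

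The step I expect to be delicate is precisely this appeal to Triangularization, which leaves open the degenerate case in which all $n$ points are collinear on a line $\ell$ (for part b), all $m$ cameras collinear), so that the triangularization step is unavailable. That case has to be handled by hand: one checks that a camera lying off $\ell$ is then still deformable --- its displacement along a line of sight can be absorbed by sliding the points along $\ell$, after rescaling about a fixed point if necessary --- so it is not stationary and nothing is claimed, while a camera lying on $\ell$ trivially lies on $\ell$, which is a deformation line of the (then deformable) points on it. Carrying out this collinear bookkeeping, and being careful that ``deformable'' always means ``moved by a deformation that is not a similarity'', is where the remaining work lies; everything else is the two-line identity of the second paragraph applied symmetrically.
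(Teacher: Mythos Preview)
Your ``key observation'' in the second paragraph is exactly the paper's entire proof. The paper proves the lemma in one sentence: ``In both cases, the angles would change if the point would not be on the deformation line.'' Unpacking that sentence gives precisely your identity $(v_{P_i}-v_{Q_j})\cdot n_{ij}=0$, specialized to a stationary camera $Q_{j_0}$ (so $v_{Q_{j_0}}=0$) and any deformable point $P_k$: the displacement $v_{P_k}$ is forced to be parallel to $\overline{Q_{j_0}P_k}$, hence the deformation line of $P_k$ passes through $Q_{j_0}$. That is the whole argument.

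Where your proposal diverges is in reading the lemma existentially (``there exists a deformable point whose line contains $Q_{j_0}$'') rather than universally (``for every deformable point, its line contains $Q_{j_0}$''). The paper intends the universal reading---look at how it is invoked in the proof of Theorem~4.1, where the stationary cameras are placed on the deformation line $M$ of a \emph{given} deformable point $P_3$. Under that reading there is nothing to exhibit: if no point is deformable the claim is vacuous, and if $P_k$ is deformable then your identity already puts $Q_{j_0}$ on its line. Consequently the appeal to the Triangularization lemma, the non-collinearity hypothesis on the points, and the entire ``collinear bookkeeping'' paragraph are all unnecessary. They are not wrong, but they solve a problem the lemma does not pose, and they import hypotheses ($n\ge 3$, non-collinear points) that the lemma does not carry.

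So: your linear-algebra framing and the line-of-sight identity are a clean and correct formalization of the paper's one-liner, and arguably clearer. But you should drop everything after ``each point is either fixed by $v$ or moves along the line through $Q_{j_0}$ and $P_k$''---that sentence already finishes the proof.
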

\begin{proof}
In both cases, the angles would change if the point would not be on the deformation line.
\end{proof}

Now the proof of the theorem. 

\begin{proof} 
By fixing one point $P_1$ and the distance $d(P_1,P_2)=1$ between two points,
the scale and translational symmetry is taken care of. Because the point $P_2$ 
moves linearly and has to stay within a fixed distance, it is fixed too. \\

The two stationary points $P_1,P_2$ define a line $L$. 
By the assumption that the points are not collinear, there exists a third point $P_3$ away from that line.  \\

If this point away from the line $L$ were stationary, we would at least three stationary points
$P_1,P_2,P_3$ which are not collinear and by the triangularization lemma, every camera had to be fixed
and again by the triangularization lemma, every point had to be stationary. \\

So, there is a point $P_3$ away from the line $L$ which can be deformed
without changing the angles. Let's call $M$ the deformation line of $P_3$. 

By the deformation lemma, 
stationary cameras are on $M$, deformable cameras are on $L$. 
Because the set of cameras is not collinear, not all cameras can be on $M$ and 
there exists at least one camera $Q_1$ on $L$.
By the two-line assumption, there exists either a camera or a point away from the two lines. It can 
not be a stationary point $P_4$ because this would give us three fixed points
which would fix the scene by the triangulation lemma. 

If it point $P_4$ is deformable, it defines an other deformation line $K$. Consider the triangle
$Q_1,P_3,P_4$. All its edges deform but the angles stay the same. By Desargues theorem, the 
three lines $K,L,M$ go through a common point. Any fixed camera must now be on this intersection
point and every other camera must be deformable on $L$. But this violates the collinearity
assumption for cameras. 
\end{proof}

{\bf Remark:} Alternatively, we could have fixed the coordinates $x_2=1$ of the second point $P_2$ instead 
of the distance. In that case, we additionally have the possibility that the point $P_2$
deforms on the line $x=x_2=1$. But then, every camera must deform on the line $x=x_1=0$. This 
violates the non-collinearity assumption for the cameras.  \\

The following result assumes less and achieves less. When we assume to see 4 points, the two 
line ambiguities are no more possible and the only condition to avoid is collinearity in the point
set or the camera point set: 

\begin{thm}[Structure from Motion for omni-cameras in the plane II]
If there are 4 points for which not more than
2 are collinear and 3 or more oriented cameras, then a unique reconstruction 
of camera and points is possible up to translation and scale. 
\end{thm}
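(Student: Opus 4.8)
The plan is to run the argument used for the previous theorem, letting the fourth point play the role that the ``two lines'' hypothesis played there. First I would normalize: fix $P_1$ and the distance $d(P_1,P_2)=1$. As before, this removes the similarity group and makes $P_1$ and $P_2$ stationary; write $L$ for the line through them, and suppose, for contradiction, that the configuration is ambiguous, i.e. that the linear system has a nontrivial kernel.

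Since no three of the four points are collinear, $P_3$ and $P_4$ both lie off $L$ and neither $\{P_1,P_2,P_3\}$ nor $\{P_1,P_2,P_4\}$ is collinear. If $P_3$ or $P_4$ were stationary, then the Triangularization lemma~(a) would fix every camera and then the Triangularization lemma~(b) would fix every point, contradicting ambiguity; hence $P_3$ and $P_4$ are both deformable, with deformation lines $M\ni P_3$ and $K\ni P_4$, and $M\neq L\neq K$. Now I would apply the Deformation lemma. Part~(a) puts every stationary camera on both $M$ and $K$, hence on $M\cap K$. Part~(b) forces the deformation line of any deformable camera to contain the stationary points $P_1$ and $P_2$, hence to equal $L$, so every deformable camera lies on $L$ --- but then that camera together with $P_1,P_2$ would be three points of the configuration on one line, which the hypothesis excludes (I read ``not more than $2$ are collinear'' as: no line carries three of the $n+m$ points and camera centers). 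Therefore there are no deformable cameras, and every camera lies on $M\cap K$.

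This already gives the contradiction: if $M\neq K$ then $M\cap K$ is a single point and the $m\ge 3$ distinct camera centers cannot all sit there, while if $M=K$ then all $m\ge 3$ cameras lie on the one line $M$, again forbidden. So the kernel is trivial and the reconstruction is unique up to translation and scale.

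The two steps where real care is needed are exactly the two places the hypotheses are invoked. The first is the claim that $P_2$ is fixed: $P_2$ can only move along a line, yet it must stay at distance $1$ from $P_1$, so no continuous deformation moves it --- but this should be phrased carefully in terms of the affine solution space of the linear system rather than waved through. The second, and the genuine obstacle, is the passage from ``deformable camera'' to ``three collinear configuration points'': this is precisely the two-line ambiguity resurfacing, and under the weaker, literal reading of the hypothesis (only the four \emph{landmark} points in general position) the statement in fact needs an extra non-collinearity assumption on the cameras, or else the Desargues argument of the previous proof --- choose a deformable camera $Q_1$ on $L$, note that the triangle $Q_1P_3P_4$ deforms with the directions of its sides preserved, conclude by Desargues that $L,M,K$ are concurrent, and derive the same contradiction with the cameras not being collinear. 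The remaining degenerate sub-cases (two of $L,M,K$ parallel, a camera sitting at $M\cap K$, a kernel element that happens to fix some of the cameras) are routine planimetry of the same flavour as in the proof of the previous theorem.
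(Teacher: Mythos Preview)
Your argument follows the paper's proof closely: same normalization, same use of the Triangularization and Deformation lemmas to force $P_3,P_4$ to be the deformable points with deformation lines $M,K$, and the same dichotomy that stationary cameras sit on $M\cap K$ while deformable cameras sit on $L$. The paper then picks a deformable camera pair $A,B$ on $L$ and a stationary camera $C$, applies the Desargues/similar-triangles argument you sketch in your last paragraph to conclude that $L,M,K$ are concurrent at $C$, and finishes with ``this contradicts the assumption that no three cameras are collinear.''

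Your main line, by contrast, reads the hypothesis as forbidding any three of the $n+m$ points and cameras from being collinear, which lets you kill the deformable-camera case outright and skip Desargues. That is cleaner, but it is not the paper's reading: the paper means only that no three of the four \emph{landmark} points are collinear. You spotted this tension yourself, and you are right that under the literal reading the theorem is missing a hypothesis --- the paper's own proof invokes ``no three cameras are collinear'' at the last step even though the statement never assumes it. So your Desargues fallback is exactly what the paper does, and your diagnosis of the gap is accurate; the paper simply does not address it. One small additional point in the same vein: the step ``$P_3$ stationary $\Rightarrow$ all cameras fixed $\Rightarrow$ all points fixed'' uses Triangularization~(b), which also tacitly needs the three fixed cameras to be non-collinear --- another place where the missing camera hypothesis is silently used, in both your write-up and the paper's.
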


\begin{proof}
Assume we have 4 points for which not more than
2 are collinear and 3 cameras. It is impossible that three points are stationary, because otherwise, by the lemma
part a) also the cameras had to be stationary and by the lemma part b), all points had to be fixed.     
So, we must have that the two points $P_1,P_2$ are stationary and two points $P,R$ are moving.    
The deformation of $P(t),R(t)$ define lines. Every stationary camera has to be on the intersection of 
these two lines and every non-stationary camera has to be on the line through $P_1,P_2$. Let's call the stationary 
camera $C$ and let the other cameras $A,B$ move on the line $P_1,P_2$. In order that the triangles 
$ABP$ and $ABR$ stay similar, the lines $AB$ and the deformation lines through $P$ and $Q$ have to
go through the common point $C$. This contradicts the assumption that no three cameras are collinear.

\parbox{14.8cm}{
\parbox{6.5cm}{\scalebox{0.60}{\includegraphics{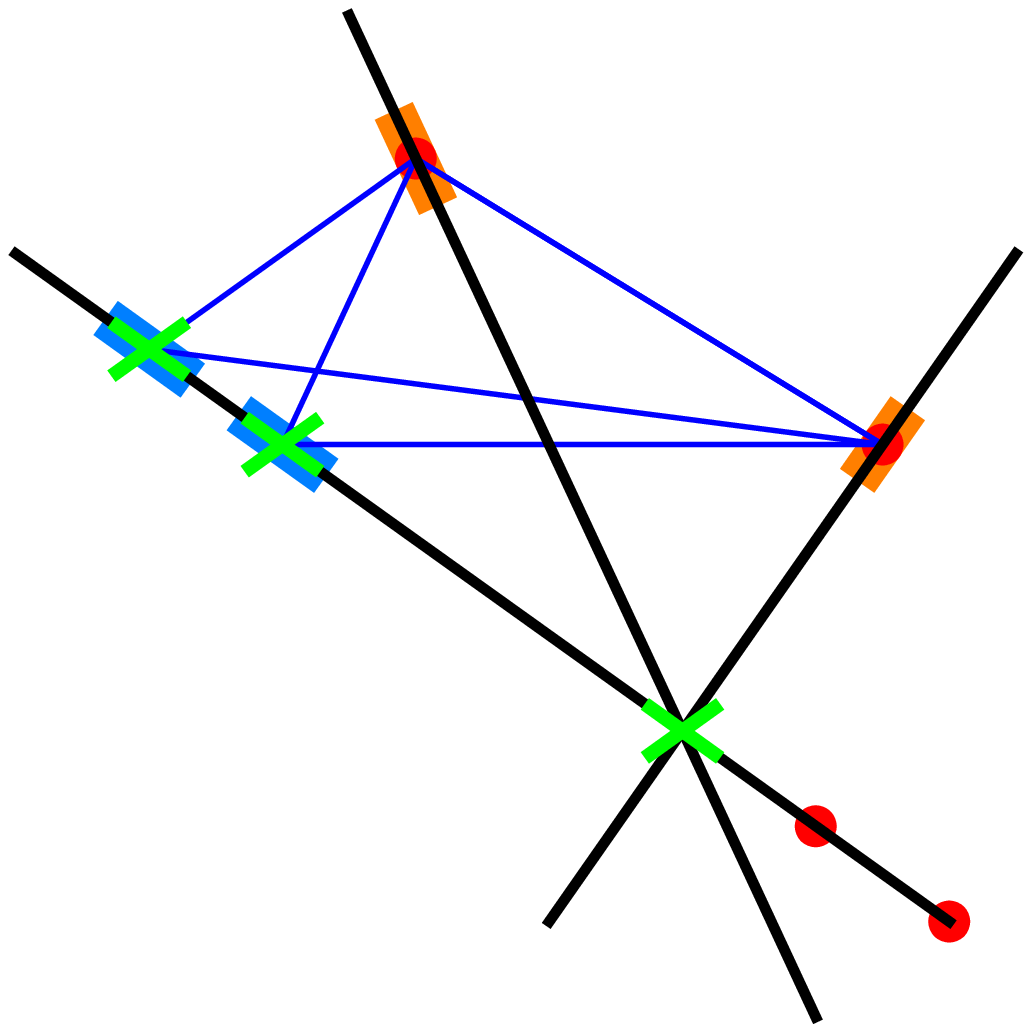}}}
\parbox{6.5cm}{ \begin{fig} 
                To the proof: if two points deform and two cameras deform, their
                deformation lines have to go through a common point by Desargues theorem \cite{berger}
                applied in the special case, when the axis of perspective is the line at infinity. 
                \end{fig}}
}

\end{proof}

The following examples show that we can not relax the conditions in the theorems.
With 3 collinear cameras and 2 or more points, there are families of camera positions with
the same angles. Also, with three non-collinear points and two camera positions, we have 7 unknowns
and 6 equations and can deform the situation. From the dimension formula we
need $nm-2m-2n-3 \geq 0$ to have enough equations. 

\parbox{14.8cm}{
\parbox{6.5cm}{\scalebox{0.60}{\includegraphics{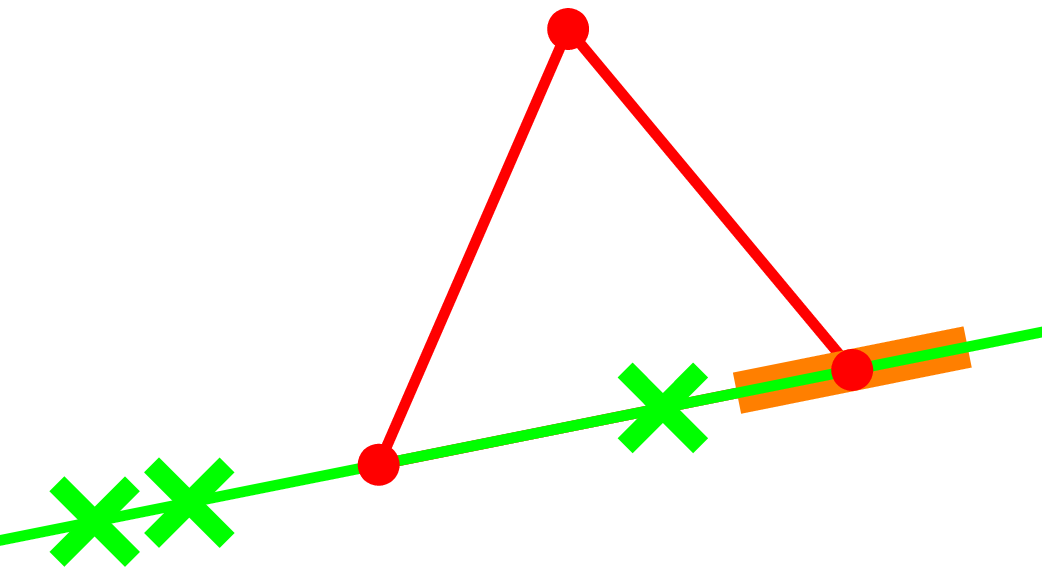}}}
\parbox{6.5cm}{\scalebox{0.60}{\includegraphics{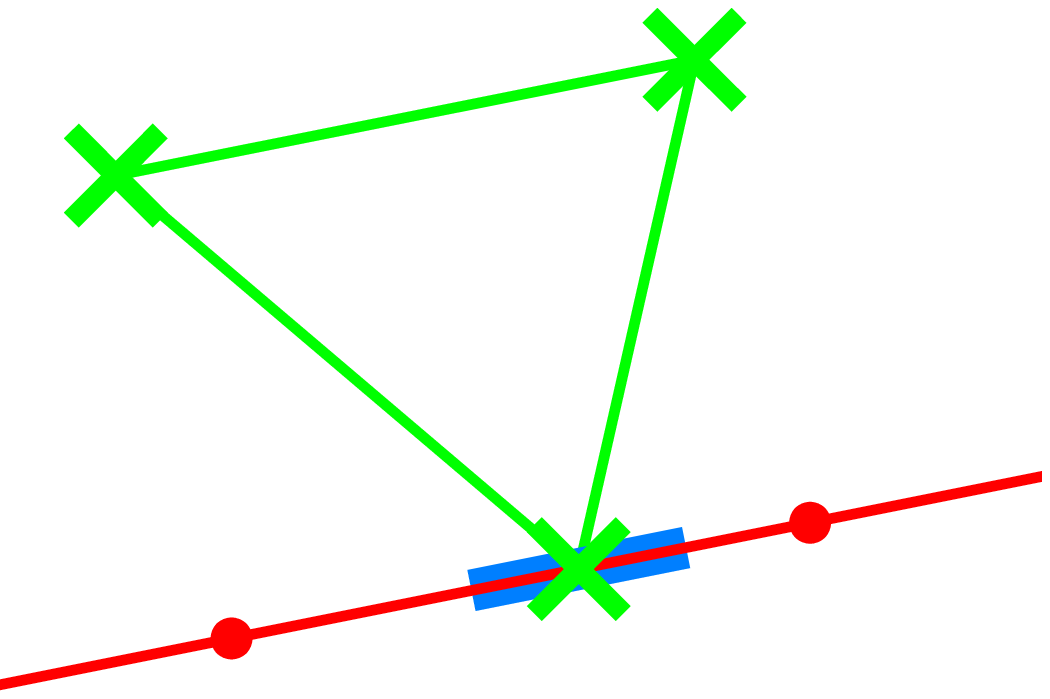}}}
}
\parbox{14.8cm}{
\parbox{6.0cm}{ 
\begin{fig} {\bf The camera collinearity ambiguity}. We can have arbitrarily many cameras
on the line. The point $P$ on the line can move. 
\end{fig}
}
\hspace{2.0cm}
\parbox{6.0cm}{ 
\begin{fig} {\bf The point collinearity ambiguity}. We can have arbitrarily many points
on the line. The camera on the line can move. 
\end{fig}
}
}

\parbox{14.8cm}{
\parbox{6.5cm}{\scalebox{0.60}{\includegraphics{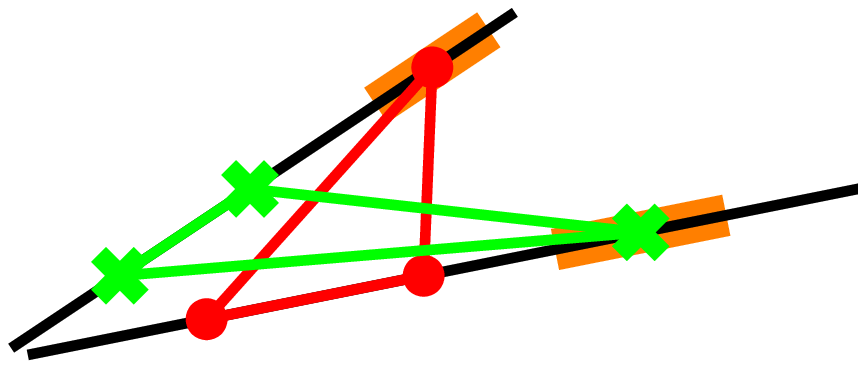}}}
\parbox{6.5cm}{\scalebox{0.60}{\includegraphics{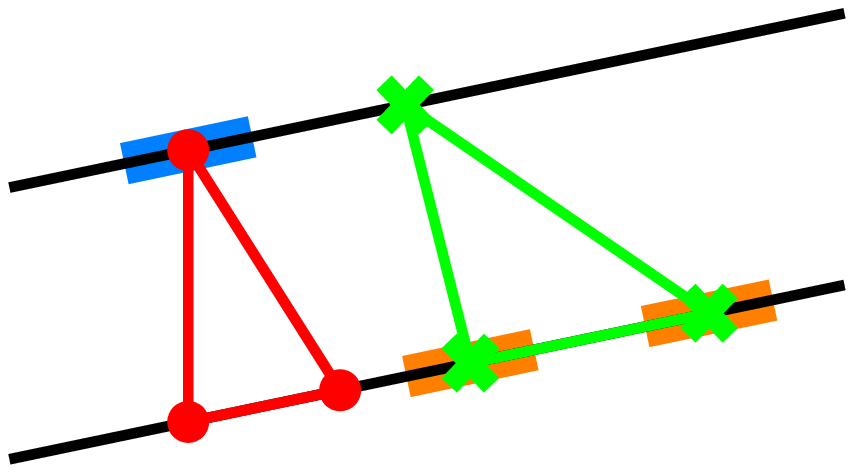}}}
}
\parbox{14.8cm}{
\parbox{6.0cm}{ \begin{fig}
{\bf A two line ambiguity I}. If cameras and points are contained in 
the union of two lines, it is possible that a camera point pair can 
be deformed without changing camera point angles. 
\end{fig}
}
\hspace{2.0cm}
\parbox{6.0cm}{ 
\begin{fig}
{\bf A two line ambiguity II}. This is an example where a pair of cameras and
a point can be deformed. The configuration is contained in the union of 
two parallel lines. 
\end{fig}
}
}
\vspace{1cm}

\parbox{14.8cm}{
\parbox{6.5cm}{\scalebox{0.60}{\includegraphics{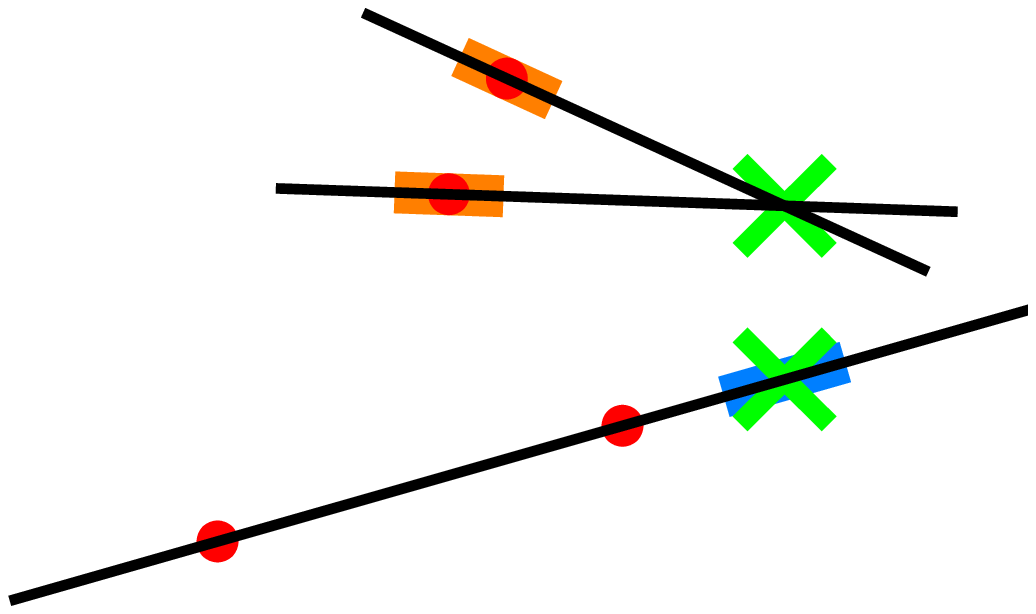}}}
\parbox{6.5cm}{\scalebox{0.60}{\includegraphics{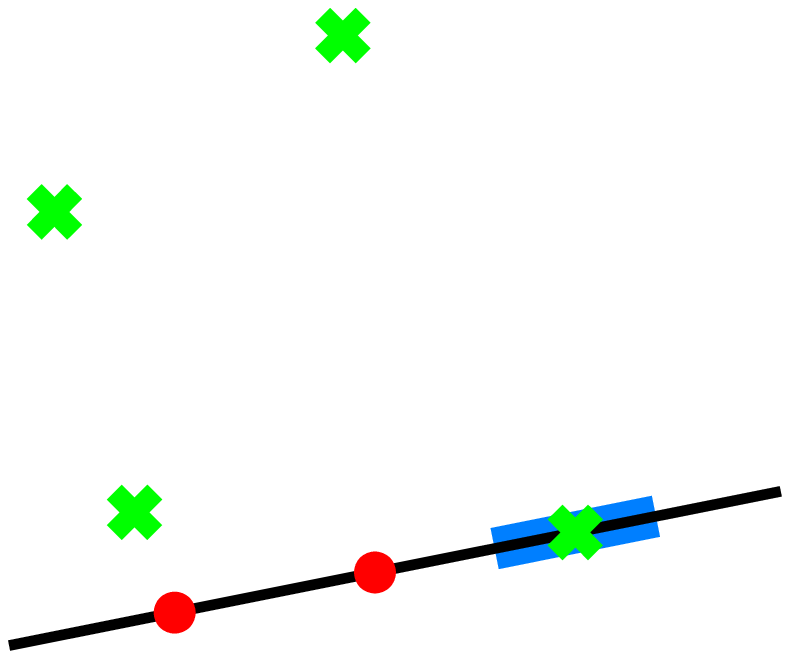}}}
\parbox{6cm}{
\begin{fig}
A two camera ambiguity, where one camera can move without changing image data. 
One can add arbitrarily many more points on the line containing the moving camera. 
Cameras and points do not need to be on the union of two lines. 
\end{fig}
}
\hspace{2cm}
\parbox{6cm}{
\begin{fig}
A two point ambiguity. One can add arbitrarily many cameras and
the union of the point camera set does not need to be on the union 
of two lines.
\end{fig}
}
}

\vspace{1cm}

Let's compare the two sides of the dimension formula in the oriented planar omni-directional
case $(d,f,g) = (2,2,3)$:  \\

\begin{tabular}{|c|c|c|c|c|} \hline
   Cameras &  Points        &  equations nm    &  unknowns 2(n+m)-3  &   unique ?  \\ \hline 
   m=1     &     n          &     n            &    2n-1             &   no, one camera ambiguities   \\
   m=2     &     n          &    2n            &    2n+1             &   no, two camera ambiguities  \\
   m=3     &    $n=2$       &     6            &     7               &   no, two point ambiguities  \\
   m=3     &    $n \geq 3$  &    3n            &    2n+3             &   yes, if no ambiguities\\
   m=4     &    $n \geq 3$  &    4n            &    2n+5             &   yes, if no ambiguities\\ \hline
\end{tabular}

\parbox{14.8cm}{
\parbox{7cm}{\scalebox{0.60}{\includegraphics{goodpoints/orientedomni2d.ps}}}
\parbox{7cm}{
\begin{fig}
In the plane $d=2$ with camera parameters $(f,g)=(2,3)$. The reconstruction region 
$dn+fm \leq (d-1) n m + g$ is given by $mn-2m-2n+3 <0$. The situation $(n,m)=(3,3)$ is 
the only borderline case. Also in all other cases $n,m \geq 3$ we have more or equal 
equations than unknowns and the reconstruction is unique if the conditions of the theorem 
are satisfied. 
\end{fig}
}
}

\section{Reconstruction for omni-cams in space}

For points $P_i =(x_i,y_i,z_i)$ and camera positions $Q_j =(a_j,b_j,c_j)$ in space,
the full system of equations for the unknown coordinates is 
nonlinear. However, we have already solved the problem in the plane and
all we need to deal with is another system of linear equations for the third coordinates 
$z_i$ and $c_j$. \\

If the slopes $n_{ij} = \cos(\phi_{ij})/\sin(\phi_{ij})$ are known, then
$$  c_i - z_j = n_{ij} r_{ij} \; ,  $$
where $r_{ij} = \sqrt{(x_i-a_j)^2 + (y_i-b_j)^2}$ are the distances
from $(x_i,y_i)$ to $(a_j,b_j)$. This leads to a system of 
equations for the additional unknowns $z_i$ and $c_j$. \\

\begin{center}
\parbox{7.5cm}{\scalebox{0.30}{\includegraphics{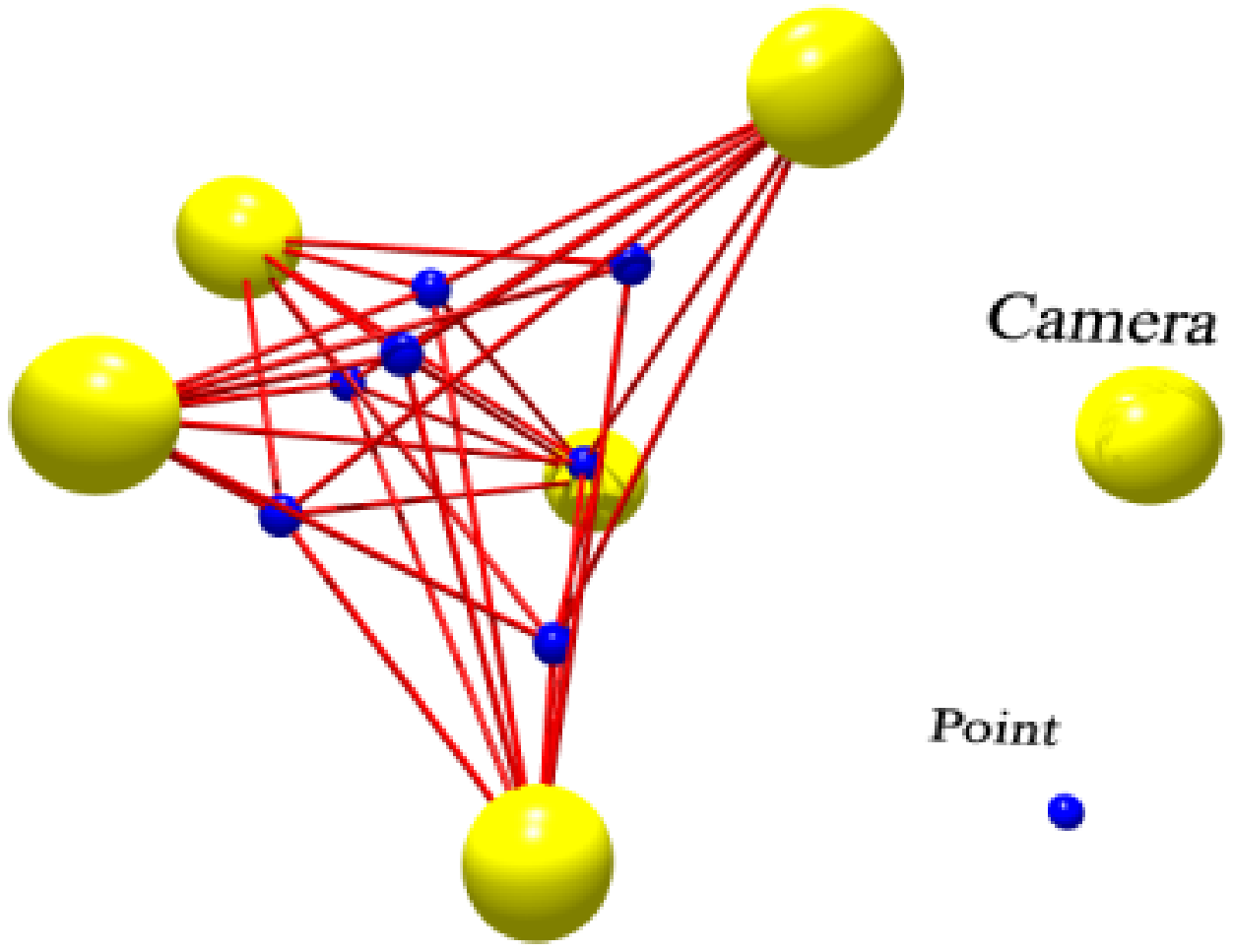}}}
\end{center}
\begin{fig}
The structure of motion problem for omni-directional cameras in space. We know the 
angles between cameras and points and want to reconstruct all the camera positions and
all the point positions up to a global rotation, translation and scale. 
\end{fig}

Here is the corresponding result of omni-directional camera reconstruction in space: 

\begin{thm}
The reconstruction of the scene and camera positions in three-dimensional
space has a unique solution if both the $xy$-projections of the point configurations 
as well as the $xy$-projection of the camera configurations are not collinear and
the union of point and camera projections are not contained in the union of two lines.
\end{thm}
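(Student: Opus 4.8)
The plan is to reduce the three-dimensional reconstruction to two independent linear problems: the planar reconstruction of the $xy$-coordinates, which is already handled by Theorem 4.1 (or its overdetermined analog Theorem 4.2), and a separate linear system for the heights $z_i$ and $c_j$. First I would invoke Theorem 4.1 applied to the $xy$-projections: since the projected point configuration and the projected camera configuration are each non-collinear, and their union does not lie in the union of two lines, the angles $\theta_{ij}$ in the horizontal plane determine all $(x_i,y_i)$ and $(a_j,b_j)$ uniquely up to a global translation and scale. In particular, once we have fixed the origin and scale as in the planar case, the horizontal distances $r_{ij} = \sqrt{(x_i-a_j)^2 + (y_i-b_j)^2}$ are all pinned down as genuine known positive numbers.

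Next I would bring in the height equations. With the $r_{ij}$ now known and the slopes $n_{ij} = \cos(\phi_{ij})/\sin(\phi_{ij})$ observed, the relation $c_i - z_j = n_{ij} r_{ij}$ (more precisely $c_j - z_i = n_{ij} r_{ij}$ in the indexing of the theorem) is a linear system in the unknowns $z_1,\dots,z_n$ and $c_1,\dots,c_m$ whose right-hand side is entirely determined. The key structural observation is that the left-hand side only ever sees the differences $c_j - z_i$, so this is exactly a system on a connected bipartite graph: the coefficient matrix is the incidence-type matrix of the complete bipartite graph $K_{n,m}$, which has a one-dimensional kernel spanned by $(z_i,c_j) = (1,\dots,1,1,\dots,1)$, corresponding precisely to a global vertical translation. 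Since $K_{n,m}$ is connected for $n,m \geq 1$, there are no other kernel vectors, so once we fix one height (e.g. $z_1 = 0$, consistent with having placed $P_1$ at the origin) the heights are uniquely determined.

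Finally I would assemble the pieces: the horizontal coordinates are unique up to the planar similarity group (translation and scale), and fixing that gauge also fixes the horizontal distances $r_{ij}$; the heights are then unique up to a vertical translation, which fixing $z_1=0$ removes; and the one global scale factor acts compatibly on both the horizontal and vertical data, so the full three-dimensional reconstruction is unique up to a spatial translation and scale, as claimed. The one point that needs a little care, and which I expect to be the main (minor) obstacle, is checking that the gauge-fixing is consistent across the two subproblems — that the scale chosen to normalize the planar problem is the same scale acting on the $z$-coordinates, so that we are not over- or under-counting symmetries — but this is exactly the remark already made in Section 4 that "if the first two coordinates of all points are known, then the height coordinate is determined uniquely by the slopes up to a global translation," so the bookkeeping goes through cleanly.
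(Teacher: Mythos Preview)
Your proposal is correct and follows essentially the same two-stage approach as the paper: first invoke the planar uniqueness theorem on the $\theta_{ij}$ data to pin down the $xy$-coordinates and hence the $r_{ij}$, then solve the linear system for the heights and argue that its only ambiguity is a global vertical shift. Your graph-theoretic phrasing of the height-uniqueness step (the incidence matrix of the connected bipartite graph $K_{n,m}$ has a one-dimensional kernel) is exactly the paper's transitivity argument (``$c_i-z_j$ and $c_k-z_j$ constant implies $c_i-c_k$ constant'') said more crisply, not a different idea.
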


\begin{proof}
We are led to the following linear system of equations
\begin{eqnarray}
 \cos(\theta_{ij}) (b_i - y_j) &=& \sin(\theta_{ij}) (a_i - x_j)  \\
 \cos(\phi_{ij})   (c_i - zj)  &=& \sin(\phi_{ij}) r_{ij}  \\  
 (x_1,y_1,z_1)                 &=& (0,0,0) \\
 x_2                           &=& 1 
\end{eqnarray}
for the unknown camera positions $(a_i,b_i,c_i)$ and scene points $(x_j,y_j,z_j)$. 
They are solved in two stages. First we solve $n m+3$ equations for the $2n+2m$
unknowns
\begin{eqnarray}
 \cos(\theta_{ij}) (b_i - y_j) &=& \sin(\theta_{ij}) (a_i - x_j)  \\
 (x_1,y_1)                     &=& (0,0) \\
 x_2                           &=& 1 
\end{eqnarray}
using a least square solution. If we rewrite this system of linear equations as 
$A x = b$, then the solution is $x_{min} = (A^T A)^{-1} A^T b$. \\

By the uniqueness theorem in two dimensions, this reconstruction is unique 
for the points $x_i,y_i,a_j,b_j$. Now we form 
$r_{ij} = \sqrt{(a_i-x_j)^2 + (b_i-y_j)^2}$ and solve the
$n m+1$ equations
\begin{eqnarray}
 \cos(\phi_{ij})   (c_i - z_j)  &=& \sin(\phi_{ij}) r_{ij}  \\  
 z_1                            &=& 0
\end{eqnarray}
for the additional $n+m$ unknowns. Also this is a least square problem. In case we have
two solutions, we have an entire line of solutions. This implies that we can find a deformation
$c_i(t),z_j(t)$ for which the angles $\phi_{ij}(t)$ stays constant. 
Because the $xy$-differences $r_{ij}$ of the points are known, these fixed angles assure
that the height differences $c_i - z_j$ between a camera $Q_i$ and a point $P_j$ is constant. 
But having $c_i-z_j$ and $c_k-z_j$ constant assures that $c_i-c_k$ is constant too and similarly
having $c_i-z_j$ and $c_i-z_k$ fixed assures that $z_j-z_k$ is fixed.  In other words, the only 
ambiguity is a common translation in the $z$ axes, which has been eliminated by assuming $z_1=0$.
The reconstruction is unique also in three dimensions. 
\end{proof}

{\bf Remarks.} \\
1) There is nothing special about taking the $xy$-plane to reduce the dimenson from $3$ to $2$. 
We can adjust the orientation of the
cameras arbitrarily. So, if 3 points are not collinear in space and three camera positions in 
space are not collinear and the camera-point set is not contained in the union of two lines, 
then a unique reconstruction is possible. Also, if four points define a tetrahedron of positive volume
and three camera positions are not on a line, then a unique reconstruction is possible. \\
2) The result also sheds some light on perspective cameras. Assume we take three pictures of
three points and if the camera orientation is identical for all three pictures, then we can 
reconstruct the point and the camera positions up to a scale and translation, if both points and
cameras are not collinear and the point camera set is not contained in the union of two lines. \\
3) If the union of the camera and point configurations is coplanar, then the ambiguity examples in 2D apply. 
If the camera point configurations are not coplanar, then two line ambiguity disappears so that
3 non-collinear camera points and 3 non-collinear scene points which are all not in a common plane
determine the situation. \\
4) In real world applications, we don't see a point at all times, because objects
sometimes obscure other objects. Let's call $\cal{P}$ the set of
points which we can see for some time interval during the movie and let $\cal{G}$
be the set of pairs $(i,j)$, for which we can observe the point $P_j$ at time $t_i$.
The pair $(\cal{P},\cal{G})$ is a graph.
If the camera can see a point for an average fraction $r= 2(n+m)/(nm+3)$ of times, then the system 
is expected to have a least square solution. For example, for a movie with $m=10$ frames observing $n=10$
points, then we need to see the points for a fraction of $60/200=0.3$ that is 
for 30 percent of the times. For $m=100$ frames observing $n=1000$ points,
we need to see an average point less than 2 percent of the time in order to do 
the reconstruction. A concrete reconstruction would use as many movie frames as possible
for a time interval $[a_1,b_1]$, then make a new reconstruction for an other time interval
$[a_2,b_2]$ etc, where $[a_i,b_i]$ are overlapping intervals on the time axes. If we look on 
each interval at the set of points which are visible at all times and for these points, the
conditions of the theorem are satisfied, then the reconstruction is unique. \\

The number of cameras can not be reduced in the plane but it can be reduced in space. 
Two cameras and two points are enough in space in general:

\begin{propo}
For $m=2$ oriented omni-directional cameras in space observing
$n \geq 2$ points, the point-camera configuration is determined up to scale and Euclidean
transformation if the point-camera configuration is not coplanar. The situation is
ambiguous in two dimensions with 2 cameras for an arbitrary number $n$ of points.
\end{propo}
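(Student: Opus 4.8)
The plan is to argue directly in three dimensions rather than to reduce to the planar problem: the two-stage strategy used in the preceding theorem (first reconstruct the $xy$-coordinates, then the heights) is unavailable here, precisely because, as the second half of the proposition asserts, the planar reconstruction from only $m=2$ cameras is never unique. So the non-coplanarity hypothesis must be spent on killing a deformation that genuinely survives in the plane.

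First I would normalize the symmetries: place $Q_1$ at the origin to absorb the translation, and regard the measured directions $u_{ij}=(P_j-Q_i)/|P_j-Q_i|\in S^2$ as the given data (absorbing the rotation). The angle data are then encoded by the relations $P_j-Q_i=r_{ij}u_{ij}$, where the distances $r_{ij}>0$ are treated as additional unknowns, so the whole system is \emph{linear}; for $m=2$ this reads $P_j=t_ju_{1j}=Q_2+s_ju_{2j}$ with $t_j=r_{1j},\ s_j=r_{2j}$. The reconstruction is now a homogeneous linear system in the variables $(t_1,\dots,t_n,s_1,\dots,s_n,Q_2)\in\mathbb R^{2n+3}$, and since it is linear it suffices to show its kernel is the one-dimensional line spanned by the true solution $(\bar t_j,\bar s_j,\bar Q_2)$, i.e. the overall scaling.

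Next, take any kernel element $(t_j,s_j,Q_2)$ and set $\tilde P_j:=t_ju_{1j}$. Since $\bar P_j=\bar t_ju_{1j}$ with $\bar t_j\neq0$, we get $\tilde P_j=\alpha_j\bar P_j$ for $\alpha_j=t_j/\bar t_j$; and $\tilde P_j-Q_2$ is parallel to $u_{2j}$, hence to $\bar P_j-\bar Q_2$, so $\alpha_j\bar P_j-Q_2=\beta_j(\bar P_j-\bar Q_2)$ for some scalar $\beta_j$. Rearranging, $(\alpha_j-\beta_j)\bar P_j=Q_2-\beta_j\bar Q_2$, which forces $Q_2\in\mathrm{span}(\bar P_j,\bar Q_2)$ for every $j$. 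Here the hypothesis enters: these planes all contain the line $\mathbb R\bar Q_2$, and if they all coincided then all $\bar P_j$ together with $\bar Q_1=0$ and $\bar Q_2$ would lie in one plane, contradicting non-coplanarity; so two of them are distinct, their intersection is exactly $\mathbb R\bar Q_2$, and therefore $Q_2=\lambda\bar Q_2$. Substituting back, $(\alpha_j-\beta_j)\bar P_j=(\lambda-\beta_j)\bar Q_2$, so whenever $\bar P_j$ is not parallel to $\bar Q_2$ we get $\alpha_j=\beta_j=\lambda$, hence $t_j=\lambda\bar t_j$ and (from $s_ju_{2j}=\tilde P_j-Q_2=\lambda(\bar P_j-\bar Q_2)$) also $s_j=\lambda\bar s_j$. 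Thus the kernel is one-dimensional, the reconstruction is unique up to a similarity, and $\lambda>0$ (since $t_j>0$) rules out the antipodal ``all points behind the cameras'' solution.

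I expect two points to require care. The first is the step $\bigcap_j\mathrm{span}(\bar P_j,\bar Q_2)=\mathbb R\bar Q_2$: it is elementary but is the only place non-coplanarity is consumed, and it should be written out, including the degenerate subcase where some $\bar P_j$ already lies on $\mathbb R\bar Q_2$. The second, more a caveat than an obstacle, is that a point on the baseline $\bar Q_1\bar Q_2$ slides freely along that line without changing any direction, so one must either add ``no point is collinear with the two camera centers'' to the general-position hypothesis — it is automatic when $n=2$, since then non-coplanarity means $\bar Q_1,\bar Q_2,\bar P_1,\bar P_2$ form a nondegenerate tetrahedron — or state the conclusion only for the non-baseline points. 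Finally, the planar claim follows from a dimension count: with $m=2$ one has $2n$ linear equations for $2n+1$ essential unknowns (after fixing translation and scale), so the solution space always has positive dimension; concretely one translates $Q_2$ in an arbitrary direction $w$ and rescales each $P_j$ along its sightline from $Q_1$, which in the plane is one scalar condition in one scalar unknown and hence solvable, whereas the identical count gives two conditions in one unknown in space — which is exactly why the three-dimensional problem becomes rigid.
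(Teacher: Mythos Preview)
Your proof is correct and in fact considerably more complete than what the paper provides. The paper does not give a formal proof of this proposition at all: after stating it, the only justification appears in the caption of the accompanying figure, which simply asserts that ``the directions $PA,PB,QA,QB$ of the tetrahedron sides determine the shape of the tetrahedron up to a dilation and a Euclidean transformation.'' That is precisely the geometric idea you exploit, but the paper leaves it as a bare claim.

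Your linearization --- introducing the ranges $r_{ij}$ as extra unknowns so that $t_ju_{1j}=Q_2+s_ju_{2j}$ becomes a homogeneous linear system, then showing its kernel is the single scaling line --- is a clean way to make the assertion rigorous. The step where you intersect the planes $\mathrm{span}(\bar P_j,\bar Q_2)$ down to $\mathbb{R}\bar Q_2$ is exactly where non-coplanarity is consumed, and you locate this correctly. Your baseline caveat is also well taken: the proposition as stated is slightly loose for $n\geq 3$, since a point on the line $Q_1Q_2$ is genuinely unrecoverable from two views even when the overall configuration is non-coplanar; the paper's figure implicitly has $n=2$ in mind, where non-coplanarity of the four points automatically rules this out. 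One minor remark: since the cameras are \emph{oriented}, the global symmetry group here is only translation and scale (the paper's $(d,f,g)=(3,3,4)$, so $g=4$), and there is no rotation to ``absorb'' --- but this does not affect your argument, as the direction data $u_{ij}$ are the observables in either reading.
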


\parbox{14.8cm}{
\parbox{6.5cm}{\scalebox{0.60}{\includegraphics{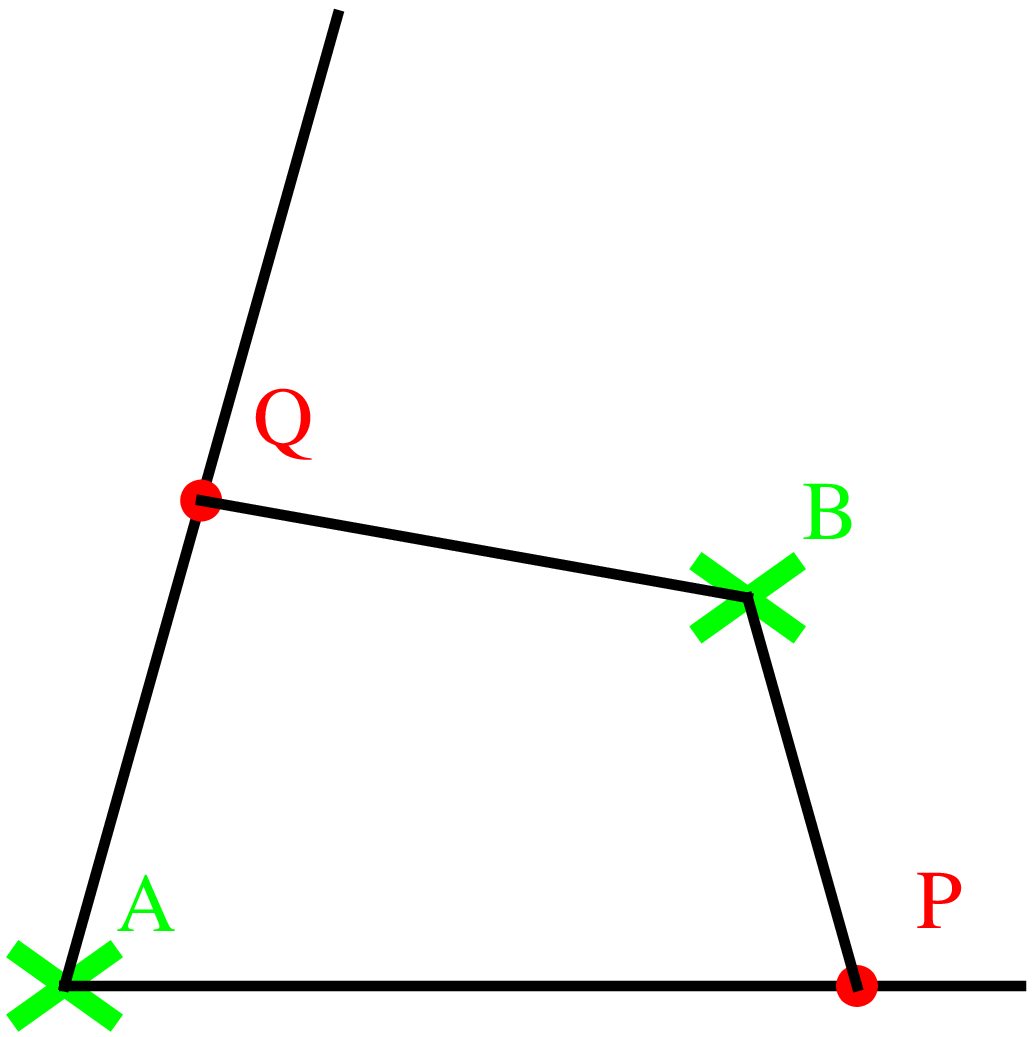}}}
\parbox{6.5cm}{
\begin{fig}
Two oriented omni-directional cameras and two points in the plane. The angles between cameras and points do not
determine the configuration. Arbitrary many points can be added. In three dimensions however,
two points $P,Q$ and two cameras $A,B$ allow a reconstruction 
because the directions $PA,PB,QA,QB$ of the tetrahedron sides
determines the shape of the tetrahedron up to a dilation and a Euclidean transformation.
The 4 points $A,B,C,D$ need to be non-coplanar.
\end{fig}}
}

\section{Structure from motion with moving bodies}

We assume now that a omni-directional camera moves through a scene, in which the bodies themselves 
can change location with time. Examples are a car moving in a traffic lane, a team football 
players moving on a football field or the earth observing the planets moving around it. \cite{RLSP06a} \\

The reconstruction needs more work in this case, but the problem remains linear if we make a Taylor
expansion of each point path. Again the reconstruction
is ambiguous if we do not fix one body because the entire scene as well as the camera could move
with constant speed and provide alternative solutions. This ambiguity is removed by assuming one
point in the scene to have zero velocity. \\

Assume first that we have a linear motion $P_i(t) = P_i + t P_i'$ of the points. There are 
now twice as many variables for the points because both positions and velocities are unknown. 

\begin{eqnarray}
 \cos(\theta_{ij}) (b_i - y_j - t_j y_j') &=& \sin(\theta_{ij}) (a_i - x_j - t_j x_j')  \\
 \cos(\phi_{ij})   (c_i - zj - t_j z_j')  &=& \sin(\phi_{ij}) r_{ij}  \\  
 (x_1,y_1,z_1)                 &=& (0,0,0) \\
 (x_1',y_1',z_1')              &=& (0,0,0) \\
 x_2                           &=& 1 
\end{eqnarray}
for the unknown camera positions $(a_i,b_i,c_i)$ and scene points $(x_j,y_j,z_j)$
and scene point velocities $(x_j',y_j',z_j')$. The nonlinear system can again be 
reduce to  two linear systems.  \\

This can be generalized to the case when we have a finite Taylor expansion.
$$  P_i(t) = \sum_{l=0}^k P_i^{(l)} t^l $$ 
for every point. We still have
$n m$ equations and a global $g$ dimensional symmetry but now $3nk+3mf$ 
unknown parameters. 
If the motion of every point in the scene is described with a Taylor expansion 
of the order $k$, then the structure from motion inequality is 
$$ d n (k+1) + m f \leq n m (d-1) + g  \; . $$

With moving bodies, there can be even more situations, where the motion can not be reconstructed:
take an example with arbitrarily many points, but where two points $P_1(t),P_2(t)$ form a line
with the camera position $r(t)$ at all times.
In that case, we are not able to determine the distance between these two points
because the points are on top of each other on the movie.

\parbox{14.8cm}{
\parbox{6.5cm}{\scalebox{0.60}{\includegraphics{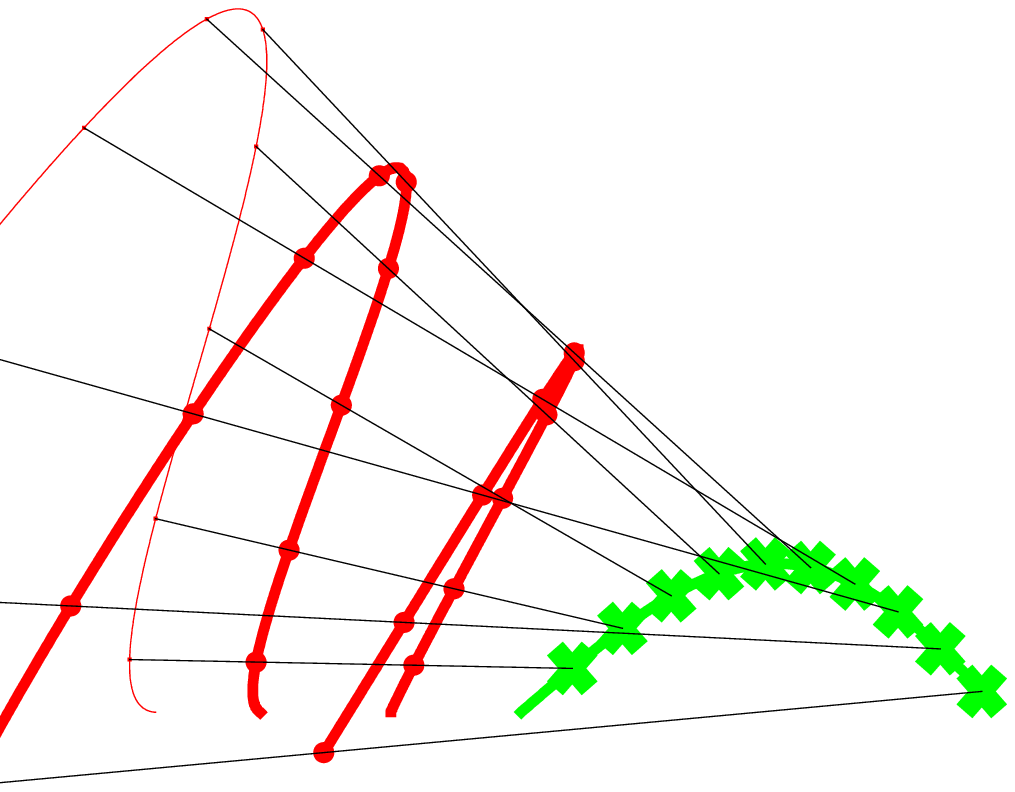}}}
\parbox{6.5cm}{ 
\begin{fig}
{\bf The hidden point ambiguity}. 
If one of the points moves so that it always
is behind an other point, we have no information to determine
the distance of the second point from the first. There are 
several point motions which produce the same angular data. 
\end{fig}
}
}

Instead of a Taylor expansion of the moving bodies, one could also do a 
Fourier decomposition of the motion
$$  P_i(t) = P_i + \sum_{l=1}^k A_{il} \cos(l t) + B_{il} \sin(l t) $$
and solve for the unknowns $P_i,A_{il},B_{il}$.  Again we are lead to a system of
linear equations for which we can look for least square solutions. The dimension formula
would be 
$$ dn (2k) + m f \leq n m (d-1) + g  $$

\section{Non-oriented omni cameras}

We have assumed that the omni-directional camera has a fixed "up" direction and points
in a fixed direction like "north" at all times. If an omni-directional camera moves
in a car, then it can turn. Additionally, the camera could rotate arbitrarily during the motion.
This is described by a curve $\theta(t),\phi(t))$. For work on ego-motion estimates in omni-directional
view, see \cite{GandhiT05}. Because the additional unknowns $\theta_j = \theta(t_j), \phi_j = \phi(t_j)$ 
enter in a nonlinear way into the equations, it is better to deal with this problem separately. 
Here is a {\bf mean motion algorithm} for computing the camera orientation motion. We assume that the 
camera motion itself is {\bf adiabatic}, meaning that the angular motion of the camera is small
compared with the frame rate. A camera built into an plane or a car would produce 
an adiabatic camera motion. A non-adiabatic example would be an omni-directional camera built into 
a tennis ball which has been hit with a spin. \\

\noindent
1) Compute the angular velocities of all points $P_i$ at the times $t_j$ with  
$$  \omega_i(t_j) = \frac{\theta_{i,j+1}-\theta_{i,j}}{t_{j+1}-t_j}, 
    \eta_i(t_j)   = \frac{\phi_{i,j+1}-\phi_{i,j}}{t_{j+1}-t_j} \; . $$
This is called the motion field.  \\
2) Find the average angular velocities
$$  \omega(t_j) = \frac{1}{n} \sum_{i=1}^n \omega_i(t_j),  \;\;\; 
    \eta(t_j) = \frac{1}{n} \sum_{i=1}^n \cos(\omega_i(t_j)) \eta_i(t_j)  \; . $$ 
This is the mean motion of the optical flow. \\
3) Produce a first approximation of the camera orientation motion
$$  \theta(t_j) = \sum_{l=1}^{j-1} \omega(t_l), \;\;\;
    \phi(t_j) = \sum_{k=1}^{j-1} \eta(t_k) \; . $$
4) Now fine tune the variables $\phi(t_j),\theta_j$ individually to minimize the 
least square solution for all $j$.  \\

Step 4) can be avoided if points are sufficiently far away from the camera. 
To compare it with our own vision capabilities: if we turn our head, 
then we do steps 1-3: we estimate the average speed with which the 
points around us move. This gives us an indication how the head moves. \\

{\bf Remarks}.
1) The situation with variable camera orientation could be
put into the framework of the moving bodies. This has the advantage that
the system of equations is still linear. The disadvantage is an explosion of 
the number of unknown variables. \\
2) A further refinement of the algorithm to first filter out points which 
are further away and only average the mean motion of those points. 
A rough filter is to discard points which move with large velocity. 
See \cite{GandhiT05} for a Bayesian approach. See also \cite{ThrunBurgardFox}.  \\

The problem of ambiguities in the case of unknown camera rotations is more 
complicated also because of the nonlinearity of the problem. Let's look at the 
dimensions in the planar case, where each camera located at $Q_j=(a_j,b_j)$ 
is turned by an angle $\theta_j$. We have $n m$ equations 
$$  \cos(\theta_{ij}-\theta_j) (b_i - y_j) = \sin(\theta_{ij}-\theta_j) (a_i - x_j)      $$
with $(x_1-x_2)^2 + (y_1-y_2)^2=1$ for the $2n+2m-3+m$ unknowns 
$$ \{ x_i,y_i \}_{i=2}^{n}, \{a_j,b_j \}_{j=1}^m,\{\theta_j\}_{j=1}^m \; . $$

\begin{center}
\parbox{15.5cm}{
\parbox{7.5cm}{\scalebox{0.45}{\includegraphics{goodpoints/orientedomni2d.ps}}}
\parbox{7.5cm}{\scalebox{0.45}{\includegraphics{goodpoints/orientedomni3d.ps}}}
}
\end{center}
\begin{center}
\parbox{15.5cm}{
\parbox{7.5cm}{Oriented Omni  \\ (d,f,g) = (2,2,3)}
\parbox{7.5cm}{Oriented Omni  \\ (d,f,g) = (3,3,4)}
}
\end{center}
\begin{fig}
The forbidden region in the $(n,m)$ plane for oriented omni-directional cameras.
\end{fig}

{\bf Remarks}.  \\
1) It seems unexplored, under which conditions the construction is 
unique for non-oriented omni-cameras. Due to the nonlinearity 
of the problem, this is certainly is not as simple as in the 
oriented case. \\

2) For omni-directional cameras in space which all point in the same direction but turn around this axis, 
the dimension analysis is the same. We can first compute the first two coordinates and then the third coordinate.
When going to the affine limit, these numbers apply to camera pictures for which we know one direction.
This is realistic because on earth, we always have a gravitational direction. So, if we know
the direction of the projection of the $z$ axis onto the picture, then we can reconstruct with $3$ pictures
and $6$ points.

\section{Experiments}

How does the reconstructed scene depend on measurement or computation errors? 
How sensible is the least square solution on the entries of $A$? The error 
depends on the volume $\sqrt{{\rm det}(A^TA)}$ of the parallelepiped spanned by the columns of 
$A$ which form a basis in the image of $A$. Because this parallelepiped has positive volume
in a non-ambiguous situation, we have:

\begin{coro}
The maximal error $\epsilon$ of the reconstruction depends 
linearly on the error $\delta$ of the angles $\theta_{ij}$ and $\phi_{ij}$ for small $\delta$. 
There exists a constant $C$ such that $|\epsilon(\delta)| \leq C |\delta|$.
\end{coro}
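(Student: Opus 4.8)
The plan is to treat the reconstruction as the solution of the two linear least‑square systems that appeared in the proof of the space theorem and to quantify how the solution moves when the coefficient matrix is perturbed by small angular errors.  First I would set up notation: write the homogeneous part of the first linear system as $A(\delta)\,x = 0$ with the normalization rows appended, so that the full system is $A(\delta)\,x = b(\delta)$, where the entries of $A$ and $b$ are $\cos\theta_{ij}$, $\sin\theta_{ij}$ (and, for the second stage, $\cos\phi_{ij}$, $\sin\phi_{ij}$, together with the distances $r_{ij}$ built from the first‑stage solution).  Each of these entries is a smooth — in fact real‑analytic — function of the true angles $\theta_{ij},\phi_{ij}$, and hence a smooth function of the perturbation vector $\delta$ collecting all angular errors, with $A(0),b(0)$ corresponding to the exact data.

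The key step is the stability of the least‑square solution map.  The least‑square solution is $x(\delta) = \bigl(A(\delta)^T A(\delta)\bigr)^{-1} A(\delta)^T b(\delta)$.  By the hypothesis of the theorems (non‑collinearity and the two‑line condition) the exact matrix $A(0)$ has trivial kernel, so $A(0)^T A(0)$ is invertible, and $\det\bigl(A(0)^T A(0)\bigr) = \bigl(\sqrt{\det(A(0)^T A(0))}\bigr)^2 > 0$ is exactly the squared volume of the parallelepiped spanned by the columns of $A(0)$.  Since matrix inversion is a smooth map on the open set of invertible matrices and $\delta\mapsto A(\delta)^T A(\delta)$ is smooth, there is a neighbourhood of $\delta=0$ on which $A(\delta)^T A(\delta)$ stays invertible (its determinant stays bounded away from $0$) and $x(\delta)$ is a smooth function of $\delta$.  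A smooth function on a neighbourhood of a point is locally Lipschitz there, so $|x(\delta) - x(0)| \le C_1 |\delta|$ for $|\delta|$ small, with $C_1$ depending on the operator norms of $A(0)$, of $\bigl(A(0)^T A(0)\bigr)^{-1}$ (i.e.\ inversely on that volume), and on the derivatives of the trigonometric entries, which are bounded.  This handles the first stage and gives the error bound for the planar coordinates $x_i,y_i,a_j,b_j$.

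For the second stage I would note that the distances $r_{ij} = \sqrt{(a_i-x_j)^2+(b_i-y_j)^2}$ are themselves smooth functions of the first‑stage solution on the region where no camera coincides with a point (one of the standing assumptions), hence smooth functions of $\delta$, and by the first stage they are perturbed by at most $C_2|\delta|$.  The second linear system for the heights $c_i,z_j$ again has, by the argument in the proof of the space theorem, a coefficient matrix with trivial kernel, so the same inversion‑is‑smooth argument applies: $(c,z)(\delta)$ is a smooth, hence locally Lipschitz, function of $\delta$, and composing the two Lipschitz estimates yields $|\epsilon(\delta)| \le C|\delta|$ for the full reconstruction with $C = \max(C_1, C_2 + C_3)$ or a similar combination.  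Fixing the gauge ($P_1$ at the origin, $x_2=1$ or $d(P_1,P_2)=1$, $z_1=0$) removes the global similarity so that $\epsilon$ is well defined as the distance between the exact and reconstructed configurations in this normalized frame.

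The main obstacle is making the claim about the \emph{maximal} error genuinely uniform: one must check that the Lipschitz constant can be taken uniform over the relevant range of true configurations, i.e.\ that the volume $\sqrt{\det(A^T A)}$ is bounded below and the entries and their derivatives bounded above on the parameter region of interest.  For a fixed exact configuration satisfying the theorem's hypotheses this is immediate from continuity and compactness of a small closed ball around $\delta=0$; the only care needed is to keep $\delta$ small enough that $A(\delta)$ does not degenerate and that no point drifts onto a camera.  I would state the corollary for a fixed admissible configuration, with $C$ allowed to depend on that configuration through the geometric quantities above, which is exactly what the phrase ``for small $\delta$'' in the statement signals.
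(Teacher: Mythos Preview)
Your proposal is correct and follows essentially the same approach as the paper: both arguments rest on writing the reconstruction as the least-square solution $x=(A^TA)^{-1}A^Tb$, observing that $A^TA$ is invertible precisely when the configuration is non-ambiguous, and concluding that the solution depends smoothly (hence Lipschitz) on the angular data with a constant governed by $1/\det(A^TA)$. Your version is more careful than the paper's brief sketch in that you explicitly treat the two linear stages and the composition of their Lipschitz bounds, but the underlying idea is identical.
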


\begin{proof}
The reconstruction problem is a least square problem $Ax=b$ which has the solution 
$x_* = (A^T A)^{-1} A^T b$. Without error, 
there exists a unique solution. In general, $A$ has no kernel if we are not in an ambiguous situation.
The error constant depends on the maximal entries of $A$ and 
$C = 1/{\rm det}(A^T A^{-1})$ which are both finite if $A$ has no kernel. 
\end{proof}

Empirically, we confirm that the maximal error is of the order of $\delta$. We only made experiments with 
synthetic but random data and see that the constant $C$ is quite small.  
The computer generated random points, photographs them with an omni-directional cameras 
from different locations and reconstructs the point locations from the angle data. 
The maximal error is expected to grow for a larger number of points because the length of the error
vector grows with the dimension. A random vector $\vec{\delta} = (\delta_1, \dots ,\delta_n)$ with 
$\delta_i$ of the order $\delta$ has length of the order $\sqrt{n} \delta$.  \\

\parbox{14.8cm}{
\parbox{4.8cm}{\scalebox{0.40}{\includegraphics{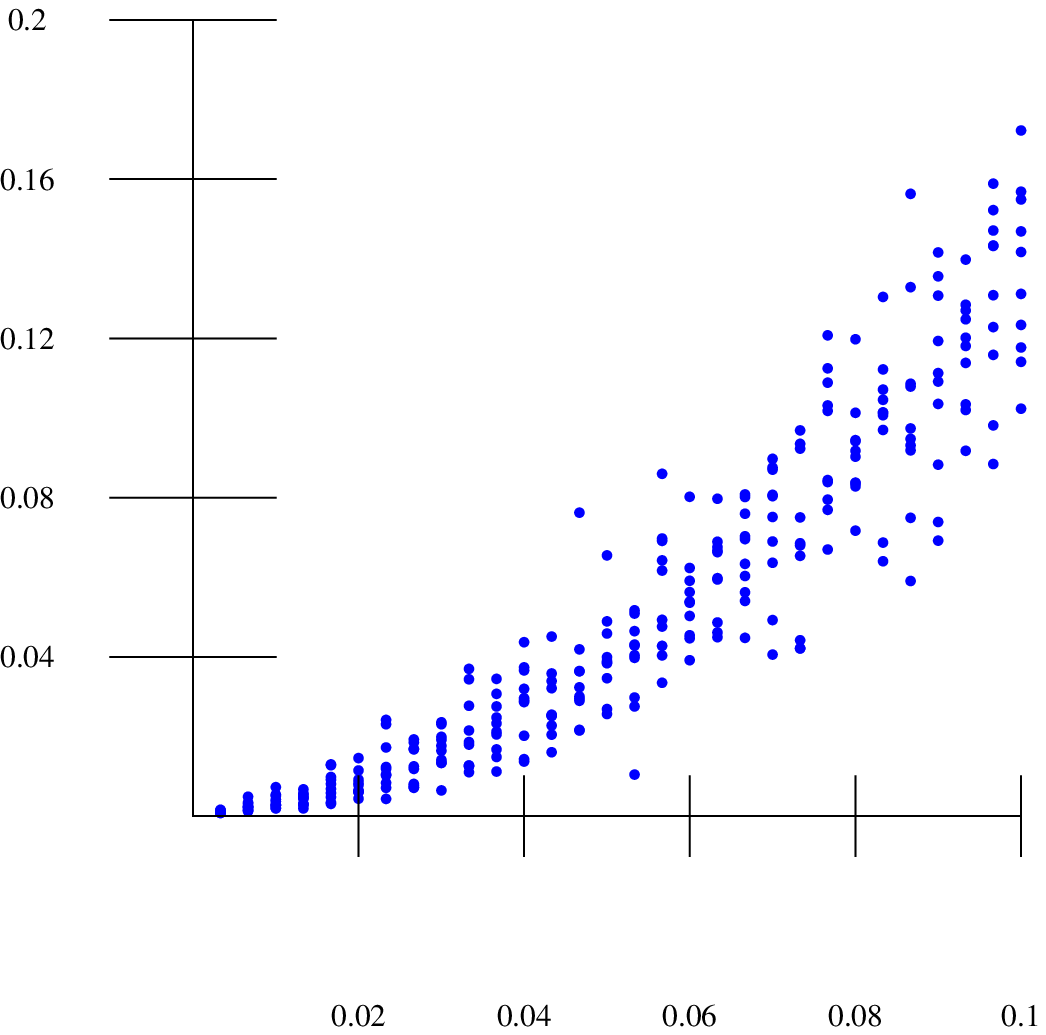}}}
\parbox{4.8cm}{\scalebox{0.40}{\includegraphics{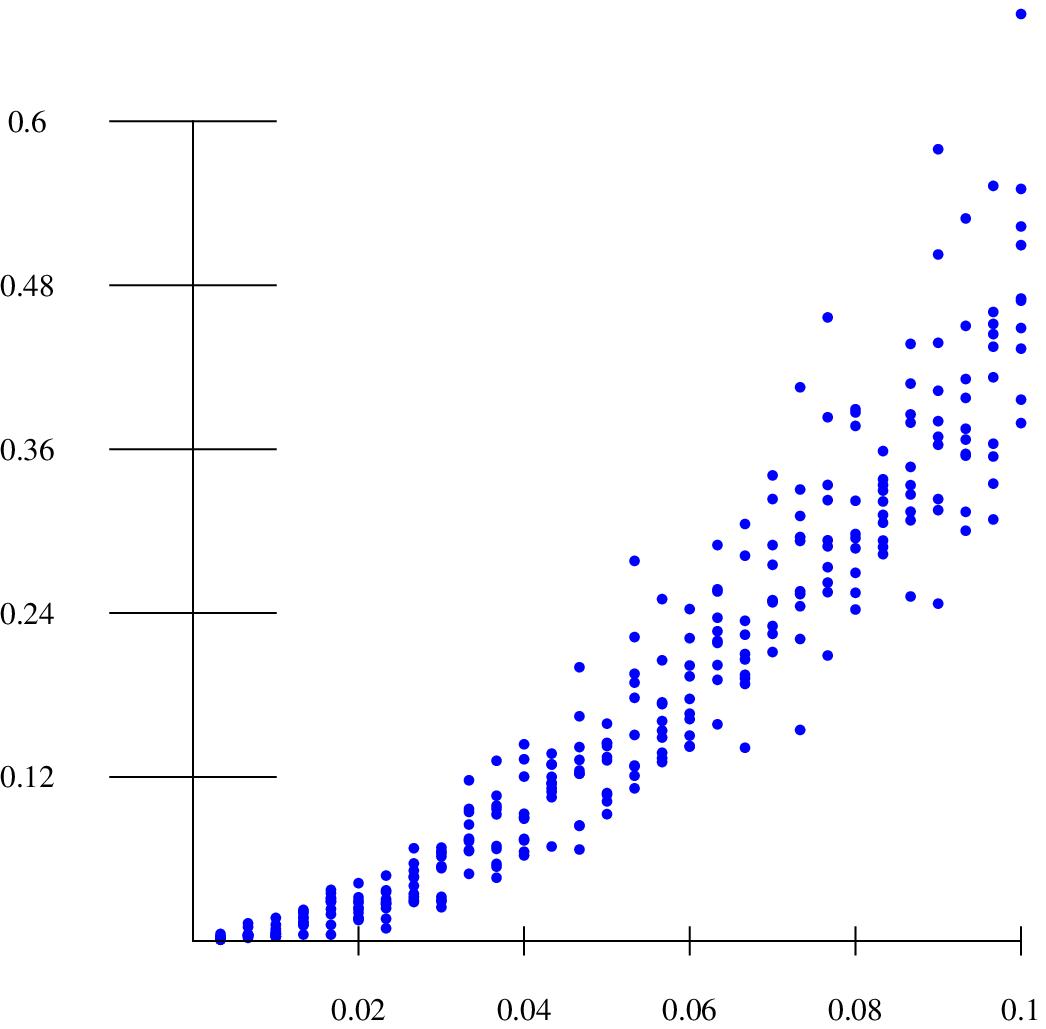}}}
\parbox{4.8cm}{\scalebox{0.40}{\includegraphics{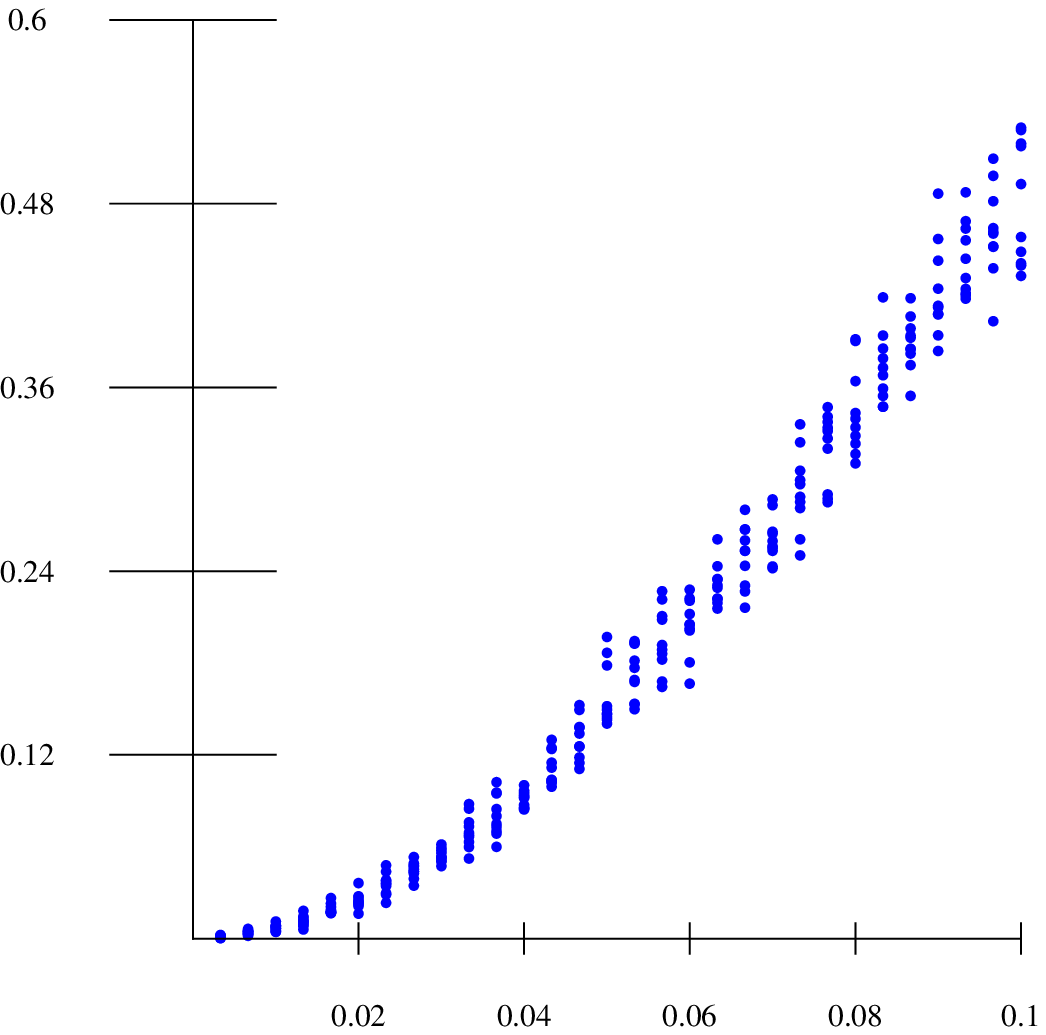}}}
}
\begin{fig}
The maximal error of the reconstruction depends on the error size.
Experiments with $(m,n) = (10,50), (50,10), (50,50)$. 
The maximal reconstruction error depends  in a linear way
on the error added to each coordinate. For every of the 30 
$\epsilon$ values between $0$ and $0.1$, we do 
the reconstruction for 10 randomly chosen camera-point configurations and plot for 
each experiment the {\bf maximal} deviation among all the coordinates of all the 
reconstructed points. 
\end{fig}

If $\{ \epsilon_{i} \}_{i=1}^{n+m}$ are the distances from the $n+m$ original points $P_i$ and 
cameras $Q_j$ to the reconstructed points and cameras, we have computed the 
maximal error ${\rm max} |\epsilon_i|$. The {\bf mean absolute errors} $\frac{1}{n+m} \sum_i |\epsilon_i|$
the {\bf mean errors} $\frac{1}{n+m} \sum_i \epsilon_i$ as well as the {\bf root mean errors}
$(\frac{1}{n+m} \sum_i \epsilon_i^2)^{1/2}$ are much smaller. \\

\begin{center}
\parbox{6.2cm}{\scalebox{0.40}{\includegraphics{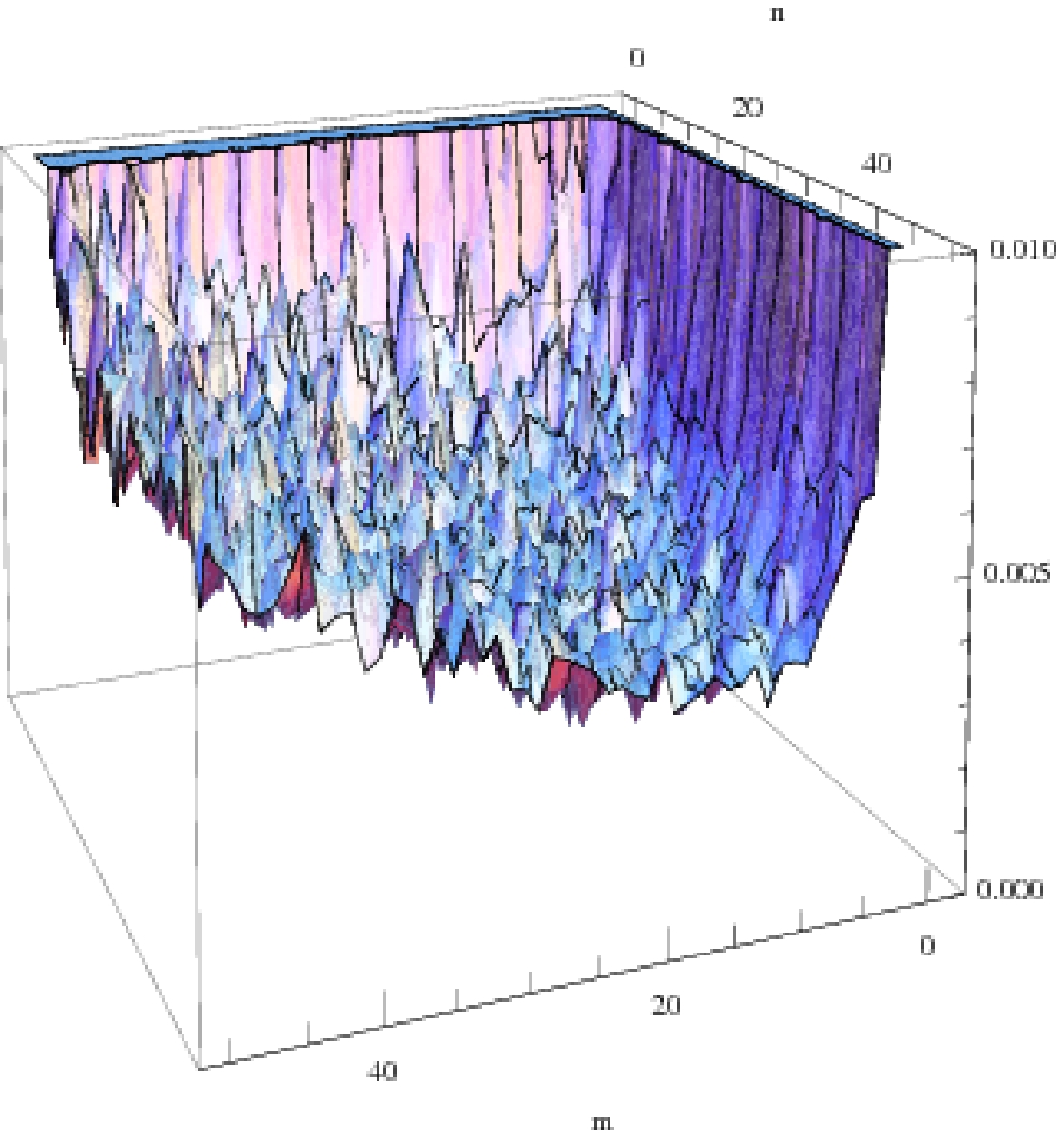}}}
\end{center}
\begin{fig}
The maximal reconstruction error depending on the number $1 \leq m \leq 50$ of cameras and the 
number $1 \leq n \leq 50$ of points. The picture shows the same graph from two sides. 
We fixed $\epsilon =0.01$. 
For each $m$ and $n$, the reconstruction was done for 30 different camera-point worlds. 
The graph shows the average over these 30 samples.
For each of the $50 \times 50 \times 30$ experiments, each camera and each point is 
displaced with an independent error of amplitude 
$[-\epsilon,\epsilon]$. The experiment indicates that after a sharp decay for small $n,m$ where we have
ambiguities, the error grows linearly at most and the change is larger for more cameras than
more points. 
\end{fig}

{\bf Remarks:} \\
1. The average error decreases like $1/n$ because the maximal error is essentially
independent of $n$. \\
2. From the practical points of view, we are also interested in how much 
aberration we see when the reconstructed scene is filmed again.
Geometrically, the least square solution $x_{*}$ of the system $Ax = b$ has the property that 
$Ax_{*}$ is the point in the image of $A$ which is closest to $b$. 
If the reconstructed scene is filmed again, then even with some
errors, the camera sees a similar scene. Because  $A (A^T A)^{-1} A^T$ is a projection
onto the image of $A$, this projected error is of the order $1$. In other words, we see
no larger errors than the actual errors. 

For refined error estimates for least square solutions see \cite{Wei,GolubVanLoan}. \\

\vspace{12pt}
\bibliographystyle{plain}
\bibliography{3dimage}

\begin{thebibliography}{10}

\bibitem{beautemps}
C.F. Beautemps-Beaupr\'e.
\newblock {\em An introduction to the practice of nautical surveying and the
  construction of sea charts}.
\newblock London, R.H. Laurie, 1823.
\newblock Translated from the French by Captain Richard Copeland of the Royal
  Navy.

\bibitem{Benosman}
R.~Benosman and S.B. Kang.
\newblock {\em Panoramic vision}.
\newblock Monographis in Computer Science. Springer, New York, 2001.

\bibitem{berger}
M.~Berger.
\newblock {\em Geometry I}.
\newblock University Text. Springer Verlag, 1987.

\bibitem{SpacekBurbridge}
Christophewr Burbridge and Libor Spacek.
\newblock Omnidirectional vision simulation and robot localization.
\newblock TAROS06, http://cswww.essex.ac.uk/mv/publ.html, 2006.

\bibitem{Chasles}
M.~Chasles.
\newblock Question no 296.
\newblock {\em Nouvelles Annales of Mathematiques}, 14:50, Harvard libraries
  Sci 880.20, 1855.

\bibitem{Ehlgen}
T.~Ehlgen and T.~Pajdla.
\newblock Maneuvering aid for large vehicle using omnidirectional cameras.
\newblock In {\em IEEE Workshop on Applications of Computer Vision (WACV07)},
  2007.

\bibitem{faugeras}
Oliver Faugeras and Quang-Tuan Luong.
\newblock {\em The Geometry of Multiple Images}.
\newblock The MIT Press, Cambridge, Massachusetts, London, England, 2001.

\bibitem{faugeras96}
Olivier Faugeras.
\newblock {\em Three-dimensional computer vision: a geometric viewpoint}.
\newblock MIT Press, Cambridge, MA, USA, second edition edition, 1996.

\bibitem{forsyth}
David~A. Forsyth and Jean Ponce.
\newblock {\em Computer Vision: A Modern Approach}.
\newblock Pearson, 2003.

\bibitem{FrancisSpacek}
George Francis and Libor Spacek.
\newblock Linux robot with omnidirectional vision.
\newblock In {\em Proceedings of TAROS06}, 2006.
\newblock to appear in TAROS06, http://cswww.essex.ac.uk/mv/publ.html.

\bibitem{Qian}
R.~Chellappa G.~Quian and Q.~Zheng.
\newblock Robust structure from motion estimation using inertial data.
\newblock {\em J. Opt. Soc. Am. A}, 18 (12):2982--2997, 2001.

\bibitem{GandhiT05}
Tarak Gandhi and Mohan~M. Trivedi.
\newblock Parametric ego-motion estimation for vehicle surround analysis using
  an omnidirectional camera.
\newblock {\em Mach. Vis. Appl.}, 16(2):85--95, 2005.

\bibitem{GolubVanLoan}
Gene~H. Golub and Charles~F. Van~Loan.
\newblock An analysis of the total least squares problem.
\newblock {\em SIAM Journal on Numerical Analysis}, 17(6):883--893, 1980.

\bibitem{hartley}
Richard Hartley and Andrew Zissermann.
\newblock {\em Multiple View Geometry in computer Vision}.
\newblock Cambridge University Press, 2003.
\newblock Second edition.

\bibitem{IshiguroNCT03}
Richard~Capella Hiroshi~Ishiguro, Kim C.~Ng and Mohan~M. Trivedi.
\newblock Omnidirectional image-based modeling: three approaches to
  approximated plenoptic representations.
\newblock {\em Mach. Vis. Appl.}, 14(2):94--102, 2003.

\bibitem{Ishiguro}
H.~Ishiguro.
\newblock Development of low-cost compact omnidirection vision sensors.
\newblock In {\em Panoramic vision}, Monographis in Computer Science. Springer,
  New York, 2001.

\bibitem{Kanade}
T.~Kanade and D.D. Morris.
\newblock Factorization methods for structure from motion.
\newblock {\em R. Soc. Lond. Philos. Trans. Ser. A Math. Phys. Eng. Sci.},
  356(1740):1153--1173, 1998.
\newblock With discussion, New geometric techniques in computer vision (London,
  1997).

\bibitem{IkeuchiSKS04}
Hiroshi~Kawasaki Katsushi~Ikeuchi, Masao~Sakauchi and Imari Sato.
\newblock Constructing virtual cities by using panoramic images.
\newblock {\em International Journal of Computer Vision}, 58(3):237--247, 2004.

\bibitem{KnillRamirezUllman}
O.~Knill and J.~Ramirez.
\newblock On {Ullmans} theorem in computer vision.
\newblock 2007.

\bibitem{KnillRamirezInequality}
O.~Knill and J.~Ramirez.
\newblock A structure from motion inequality.
\newblock 2007.

\bibitem{Koenderink}
J.~Koenderink and A.~van Doorn.
\newblock Affine structure from motion.
\newblock {\em J. Opt. Soc. Am. A}, 8(2):377--385, 1991.

\bibitem{fastslam}
M.~Montemerlo and S.~Thrun.
\newblock {\em FastSLAM: A Scalable Method for the Simultaneous Localization
  and Mapping Problem in Robotics}.
\newblock Springer Tracts in Advanced Robotics. Springer, 2007.

\bibitem{photogrammetry}
American~Society of~Photogrammetry.
\newblock {\em Manual of Photogrammetry}.
\newblock American Society of Photogrammetry, second edition edition, 1952.

\bibitem{Ortiz}
S.~R. Ortiz.
\newblock Structure from motion using omni-directional vision and certainty
  grids.
\newblock Thesis at Texas {AM} University, 2004.

\bibitem{Richards}
Whitman Richards.
\newblock Structure from stereo and motion.
\newblock {\em J. Opt. Soc. Am. A}, 2 (2):343--349, 1985.

\bibitem{RLSP06a}
F.~Rothganger, Svetlana Lazebnik, Cordelia Schmid, and Jean Ponce.
\newblock Segmenting, modeling, and matching video clips containing multiple
  moving objects.
\newblock {\em IEEE Transactions on Pattern Analysis \& Machine Intelligence},
  2006.
\newblock to appear.

\bibitem{Peleg}
Y.~Pritch S.~Peleg and M.~Ben-Ezra.
\newblock Omnistereo: panoramic stereo imaging.
\newblock {\em IEE Transactions on Pattern Analysis and Machine Intelligence},
  23, 2001.

\bibitem{Solomon}
David Solomon.
\newblock {\em Computer graphics and geometric modeling}.
\newblock Springer Verlag, 1999.

\bibitem{szeliski94recovering}
R.~Szeliski and S.~B. Kang.
\newblock Recovering 3{D} shape and motion from image streams using non-linear
  least squares.
\newblock {\em Journal of Visual Communication and Image Representation},
  5(1):10--28, 1994.

\bibitem{Szeliski}
R.~Szeliski and Sing~Bing Kang.
\newblock Recovering 3{D} shape and motion from image streams using non-linear
  least squares.
\newblock Technical report, Robotics Institute, Carnegie Mellon University,
  Pittsburgh, PA, March 1993.

\bibitem{Tamimi05}
Hashem Tamimi, Henrik Andreasson, Andr\'e Treptow, Tom Duckett, and Andreas
  Zell.
\newblock Localization of mobile robots with omnidirectional vision using
  particle filter and iterative sift.
\newblock In {\em Proceedings of the 2005 European Conference on Mobile Robots
  (ECMR05)}, Ancona, Italy, 2005.

\bibitem{ThrunBurgardFox}
Wolfram~Burgard Thrun and Dieter Fox.
\newblock {\em Probabilistic Robotics}.
\newblock MIT Press, 2005.
\newblock first edition.

\bibitem{trucco}
Emanuele Trucco and Alessandro Verri.
\newblock {\em Introductory techniques for 3-D computer vision}.
\newblock Prentice Hall, New Joersey, 1998.

\bibitem{Ullman}
S.~Ullman.
\newblock {\em The interpretation of visual motion}.
\newblock MIT Press, 1979.

\bibitem{Wei}
Mu~Sheng Wei.
\newblock The perturbation of consistent least squares problems.
\newblock {\em Linear Algebra and its Applications}, 112:231--245, 1989.

\bibitem{Yagi03}
Yasushi Yagi, Wataru Nishi, Nels Benson, and Masahiku Yachida.
\newblock Rolling and swaying motion estimation for a mobile robot by using
  omnidirection obtical flows.
\newblock {\em Machine Vision and Applications}, 14:112--120, 2003.

\bibitem{Yagi04}
Yasushi Yagi and Masahiko Yachida.
\newblock Real-time omnidirectional image sensors.
\newblock {\em International Journal of Computer Vision}, 58(3):173--207, 2004.

\bibitem{MatsumotoII03}
Masayuki~Inaba Yoshio~Matsumoto and Hirochika Inoue.
\newblock View-based navigation using an omniview sequence in a corridor
  environment.
\newblock {\em Mach. Vis. Appl.}, 14(2):121--128, 2003.

\end{thebibliography}
\end{document}